\newcommand{\A}{\mathcal{A}}
\newcommand{\D}{\mathcal{D}}
\newcommand{\cD}{\D}
\newcommand{\cK}{\K}
\newcommand{\M}{\mathcal{M}}
\newcommand{\N}{\mathbb{N}}
\newcommand{\K}{\mathcal{K}}
\newcommand{\sign}{{\rm sign}}
\newcommand{\rank}{{\rm rank}}
\newcommand{\Trace}{{\rm Trace}}
\newcommand{\Gen}{{\rm Gen}}
\newcommand{\bzero}{\mathbf{0}}
\newcommand{\zl}[1]{\textsf{\color{red} Zhou : #1}}
\newcommand{\daogao}[1]{\textsf{\color{blue} Liu: #1}}
\newcommand{\E}{\mathbb{E}}
\newcommand{\R}{\mathbb{R}}
\newcommand{\Var}{\mathrm{Var}}
\newcommand{\Cov}{\mathrm{Cov}}
\renewcommand{\d}{\mathrm{d}}
\newcommand{\ox}{\overline{x}}
\DeclareMathOperator{\tr}{tr}
\newcommand{\cN}{\mathcal N}
\theoremstyle{plain}
\newtheorem{theorem}{Theorem}[section]
\newtheorem{proposition}[theorem]{Proposition}
\newtheorem{lemma}[theorem]{Lemma}
\theoremstyle{definition}
\newtheorem{definition}[theorem]{Definition}
\newtheorem{assumption}[theorem]{Assumption}
\theoremstyle{remark}
\newtheorem{remark}[theorem]{Remark}
\title{The Power of Sampling: Dimension-free Risk Bounds in Private ERM}
\author{%
  Yin Tat Lee\thanks{University of Washington and Microsoft Research}\\
  \And Daogao Liu \thanks{University of Washington}
  \And Zhou Lu \thanks{Princeton University}
  % Coauthor \\
  % Affiliation \\
  % Address \\
  % \texttt{email} \\
  % \AND
  % Coauthor \\
  % Affiliation \\
  % Address \\
  % \texttt{email} \\
  % \And
  % Coauthor \\
  % Affiliation \\
  % Address \\
  % \texttt{email} \\
  % \And
  % Coauthor \\
  % Affiliation \\
  % Address \\
  % \texttt{email} \\
}
\begin{document}

\maketitle

\begin{abstract}
Differentially private empirical risk minimization (DP-ERM) is a fundamental problem in private optimization. While the theory of DP-ERM is well-studied, as large-scale models become prevalent, traditional DP-ERM methods face new challenges, including (1) the prohibitive dependence on the ambient dimension, (2) the highly non-smooth objective functions, (3) costly first-order gradient oracles.
Such challenges demand rethinking existing DP-ERM methodologies.
In this work, we show that the regularized exponential mechanism combined with existing samplers can address these challenges altogether: under the standard unconstrained domain and low-rank gradients assumptions, our algorithm can achieve rank-dependent risk bounds for non-smooth convex objectives using only zeroth order oracles, which was not accomplished by prior methods. 
This highlights the power of sampling in differential privacy.
We further construct lower bounds, demonstrating that when gradients are full-rank, there is no separation between the constrained and unconstrained settings.
Our lower bound is derived from a general black-box reduction from unconstrained to the constrained domain and an improved lower bound in the constrained setting, which might be of independent interest.

% To further understand the fundamental limit of dimension-free risk bounds, we examine the role of the unconstrained domain assumption in achieving such bounds. We establish a lower bound indicating that the unconstrained assumption, on its own, is insufficient for attaining dimension-free risk bounds.
% This result complements earlier research on the insufficiency of the low-rank gradient assumption.
\end{abstract}

\section{Introduction}
Differential privacy, as established in \cite{dmns06}, has become the gold standard for privacy preservation in machine learning. It offers robust guarantees against extracting private individual data from trained models. Specifically, an algorithm $\M$ is said to be $(\epsilon,\delta)$-differentially private\footnote{When $\delta>0$, we refer to it as approximate-DP, and we call the case $\delta=0$ pure-DP.} if for any pair of inputs $\D$ and $\D'$ differing by a single data and any event ${\cal O}\in \mathrm{Range}(\M)$, it satisfies
\begin{align*}
    \Pr[\M(\D)\in {\cal O}]\leq \exp(\epsilon)\Pr[\M(\D')\in {\cal O}] +\delta.
\end{align*}
A pivotal application of DP is in Empirical Risk Minimization (ERM), a fundamental problem in machine learning. In DP-ERM, the goal is to devise a privacy-preserving algorithm that minimizes the loss function 
$$
L(\theta ;\D)=\frac{1}{n}\sum_{i=1}^n \ell(\theta;z_i),
$$
given a family of functions on a domain $\K\subseteq \R^d$ and a dataset $\D=\{z_1,\cdots,z_n\}$. For instance, here $\theta$ can represent the parameters of a neural network, $z_i$ can be a training data pair (image and label), and $\ell(\theta;z_i)$ the classification error for that data.

The quality of the output $\theta$ of a private algorithm is evaluated by its excess empirical loss, defined as $$L(\theta;\D)-\min_{\theta'\in\K}L(\theta';\D),$$  the difference between the loss of $\theta$ and the minimum possible loss over the convex domain $\mathcal{K} \subset \mathbb{R}^d$. In practical terms, this means seeking $\theta$ that minimizes this loss while ensuring as much privacy as possible.

Prior research in DP-ERM has largely focused on \textit{convex} loss functions.
In the most well-studied setting of the constrained domain and Euclidean geometry, a risk bound of $$\Theta\left(\frac{\sqrt{d\log(1/\delta)}}{\epsilon n}\right)$$ is known to be tight \cite{bst14,su15,wyx17,bftt19}. 
However, the polynomial dependency on the dimension $d$ becomes impractical in high-dimensional settings typical of contemporary machine learning, prompting the study of \textit{dimension-free} risk in DP-ERM. We refer to a risk bound as dimension-free (or dimension-independent) if it has no explicit polynomial dependence on the ambient dimension $d$, allowing for dependence on more nuanced properties like the rank of gradient subspaces.

\subsection{Unbounded domain}
Motivated by evading the ambient dimension dependence, there is a line of work \cite{jt14,sstt21,LLH+22} studying how to get 'dimension-free' excess risk bounds and succeed in the unbounded domain when the gradients are low-rank.
We discuss the previous assumptions on the domain and gradients, and the associated interesting findings there.
\begin{assumption}[Constrained Domain]
\label{assump:bounded}
The convex domain $\K\subsetneq\R^d$ of diameter $C$.
\end{assumption}

\begin{assumption}[Unconstrained Domain with Prior Knowledge]
\label{assump:unbounded}
    The convex $\K=\R^d$, and we know there exists $C>0$, such that for any convex loss function $\ell(;z)$ in the universe, the minimizer $\theta^*:=\arg\min_{\theta} \ell(\theta;z)$ satisfies that $\|\theta^*\|\le C$.
\end{assumption}

At first glance, these two assumptions seem equivalent to each other.
For example, restricting the unconstrained domain to a ball of radius $C$, can reduce Assumption~\ref{assump:unbounded} to Assumption~\ref{assump:bounded}.
Though not explicitly straightforward, the reversal direction of reduction is convincing and believable.
Nonetheless, under the low-rank gradients assumption, there is a separation between these two assumptions.

\begin{assumption}[Low-Rank Gradients]
\label{assump:rank-gradient}
There is an orthogonal projection matrix $P$ with rank $\rank$ \footnote{In the classic full-rank assumption, $\rank=d$ and $P=I$.}, such that
\begin{align*}
    \|(I-P)\nabla \ell(\theta;z)\|=0,\forall \theta,\forall z.
\end{align*}
\end{assumption}

Under Assumption~\ref{assump:unbounded} and Assumption~\ref{assump:rank-gradient}, previous work \cite{sstt21} suggests a dimension-independent bound $\Theta(\frac{\sqrt{\rank\log(1/\delta)}}{\epsilon n})$.
On the other hand, under Assumption~\ref{assump:bounded} and Assumption~\ref{assump:rank-gradient}, there is a dimension-dependent lower bound
$\Omega(\frac{\sqrt{d\log(1/\delta)}}{n\epsilon})$, see \cite{bst14}.\footnote{\cite{bst14} does not explicitly state the low-rank gradients assumption, but their lower bound construction is based on GLM, and hence leads to the lower bound claimed above.}
This suggests some deeper differences between the Constrained Domain and the Unconstrained Domain.
For example, if we run DP-SGD under Assumption~\ref{assump:unbounded}, we do not need the projection step, and hence, the noise added vertically to the gradients subspace does not influence the final utility, and we can reduce the problem dimension from $d$ to $\rank$.
This is how some of the previous works got the rank-dependent risk bounds.
On the other hand, if we run DP-SGD under Assumption~\ref{assump:bounded}, we need to project back to $\K$, and the previous analysis does not hold.

Theoretically, we get the dimension-dependent lower bound even for convex loss functions in the classic constrained full-rank setting.
Nonetheless, in practice, large models can be fine-tuned with DP to achieve performance that is approaching that of non-private models.
This contradiction demonstrates the classic assumptions may be too restrictive, and people propose low-rank gradient assumptions as natural relaxations.
We refer the readers to \cite{sstt21,LLH+22} for more justifications about the low-rank gradient assumptions.

\subsection{Motivations}
\cite{jt14} first studied how to achieve dimension-independent risk bounds through the use of output and objective perturbation.
Their bound is suboptimal, and some results rely on the smoothness assumption of the objective functions.
Both \cite{sstt21} and \cite{LLH+22} are based on DP-SGD, with the first-order gradient oracles.
\cite{ZTOH23} is a zeroth order method that assumes the functions are smooth, querying the function values at two near points and using the value difference to estimate the gradients, then applying the gradient descent with the estimated gradients.
In many applications, gradient evaluations can be costly or unavailable; for example, bandit problems, and/or smoothness assumptions may not be feasible.
% In the setting with an unconstrained domain, it's possible to achieve a tight risk bound $\Tilde{O}(\frac{\sqrt{\rank}}{\epsilon n})$ \cite{jt14,sstt21,LLH+22}, under the assumption that the gradients of loss functions lie in low-rank subspaces. While these results are promising, they often rely on smooth loss functions \cite{ZTOH23} or gradient-based algorithms \cite{sstt21,LLH+22}, both of which may not be feasible in modern machine learning contexts like large-scale neural networks with high non-smooth loss and costly gradient evaluations. This leads to our first research question:

\begin{quote}
    \textbf{Question 1}: \emph{Can we develop DP-ERM algorithms with dimension-free risk bounds, that do not require smooth loss functions or first-order oracles?}
\end{quote}

% Besides upper bounds, lower bounds are also pretty important.
We know the low-rank gradients assumption play a crucial role in achieving dimension-independent upper bounds, and with low-rank gradients assumption, there is a separation between the bounded and unbounded domain.
However, does the separation still exist without the low-rank gradients assumptions?
% Since most of the previous works are under the constrained domain assumption along with meaningful lower bounds, and the unconstrained domain is less studied.
As we discussed, when the gradients are full-rank, we may reduce the problem under the unconstrained assumption to the constrained assumption.
This suggests that Assumption~\ref{assump:unbounded} is a stronger assumption.
It is unclear whether we can get the same lower bounds under Assumption~\ref{assump:bounded} and Assumption~\ref{assump:unbounded}.
% Besides upper bounds, the fundamental limit of dimension-free risk bounds is even less studied. Existing dimension-free results concurrently assume both an unconstrained domain (Assumption~\ref{assump:bounded}) and low-rank gradients (Assumption~\ref{assump:rank-gradient}).  Previous studies have demonstrated the insufficiency of solely relying on the low-rank gradient assumption in the constrained setting to achieve dimension-free risk bounds, where the dimension-dependent lower bounds are constructed by linear functions that are low-rank \cite{bst14}. However, the role of the unconstrained domain assumption remains ambiguous, leading to our second question:

\begin{quote}
    \textbf{Question 2}: \emph{Is the lower bound under the assumption of an unconstrained domain (Assumption~\ref{assump:unbounded}) the same as lower bound under constrained domain (Assumption~\ref{assump:bounded}) when the gradients can be full-rank?}
\end{quote}

\subsection{Our contributions}
\paragraph{Question 1:}
We present a positive response to the first question by designing a new algorithm based on the simple exponential mechanism. We show that it can achieve rank-dependent risk bounds in an unconstrained setting for non-smooth convex objectives, using only zeroth-order oracles. This is the first dimension-free result in DP-ERM that neither assumes smoothness nor requires gradient information, aligning more closely with the needs of modern machine learning paradigms. 
In addition, this result is achieved without any algorithmic modifications to the exponential mechanism, illustrating the inherent low-rank property of sampling-based private algorithms.

\paragraph{Question 2:}
In response to the second question, we establish the same lower bound applicable under both domain assumptions.
% In response to the second question, we establish a dimension-dependent lower bound applicable when the sole assumption of low-rank gradients is relinquished. 
We establish a general black-box reduction from the unconstrained to the constrained setting.
Our result indicates no separation between the unconstrained domain assumption and the constrained domain assumption with full-rank gradients,
% alone is insufficient for achieving dimension-free risk bounds, 
advancing our understanding of dimension-free DP-ERM. Furthermore, our lower bound is broadly applicable and improved over previous results: it's valid across any $\ell_p$ geometry for $p\ge 1$, improving the previously known best lower bound of \cite{afkt21}. For detailed comparisons, we refer to table \ref{tab}.

\begin{table*}[ht] 
\label{tab}
\begin{center}
% \resizebox{\textheight}{14mm}
{
\begin{tabular}{|c|c|c|c|c|c|}
\hline 
Article& Constrained?& $\ell_p$ & Loss Function& Pure DP& Approximate DP\\
\hline 
\cite{bst14}& constrained & $p=2$ & GLM & $\Omega({\frac{d}{n\epsilon}})$ &$\Omega({\frac{\sqrt{d}}{n\epsilon}})$\\
\hline 
\cite{su15}& constrained & $p=2$ & GLM & N/A & $\Omega({\frac{\sqrt{d\log(1/\delta)}}{n\epsilon}})$\\
\hline
\cite{sstt21}& unconstrained & $p=2$ & GLM &  N/A &$\Omega(\frac{\sqrt{\rank}}{{n\epsilon}})$\\
\hline 
\cite{afkt21}&  both& $p=1$ & general& N/A & $\Omega({\frac{\sqrt{d}}{n\epsilon\log d}})$\\
\hline
\cite{bgn21} & constrained & $1<p\leq 2$ & GLM & N/A & $\Omega((p-1)\frac{\sqrt{d\log(1/\delta)}}{n\epsilon})$\\
\hline
Ours & both & $1\leq p\leq \infty$ &  general & $\Omega({\frac{d}{n\epsilon}})$ &$\Omega({\frac{\sqrt{d\log(1/\delta)}}{n\epsilon}})$\\
\hline
\end{tabular}}
\caption{Comparison of lower bounds for private convex ERM. 
\iffalse The lower bound of \cite{sstt21} is weaker than ours in the important over-parameterized $d\gg n$ setting, as $\rank\le \min\{n,d\}$.\fi}
\end{center}
\end{table*}

\subsection{Related work}
The first dimension-independent bounds were achieved by \cite{jt14} through the use of output and objective perturbation. 
Subsequently, \cite{sstt21,LLH+22} improved the results of \cite{jt14} and achieved the dimension-independent bounds by utilizing the DP-SGD. 
The approach of \cite{sstt21} assumes that the function gradients are precisely situated within a low-rank subspace, whereas \cite{LLH+22} relaxed this constraint, allowing gradients to extend outside the low-rank subspace.
We follow the same assumption as \cite{LLH+22} and will specify it later.
Currently, DP-SGD stands as the sole mechanism known to achieve optimal dimension-independent bounds under approximate DP.

The majority of existing lower bounds in DP-ERM utilize GLM functions. As an example, \cite{bst14} employs a linear function, $\ell(\theta;z)=\langle \theta,z\rangle$, that doesn't extend to the unconstrained case due to potential infinite loss values. To address this limitation, \cite{sstt21} adopts the objective functions $\ell(\theta;z)=|\langle \theta,x\rangle-y|$. By transforming the problem of minimizing GLM into estimating the mean of a set of vectors, they derived the lower bound using tools from coding theory.

Works such as \cite{krrt20,zwb20} explored how to circumvent the curse of dimensionality for functions beyond GLMs, employing public data to identify a low-rank subspace, an approach conceptually akin to \cite{sstt21}. Differential Private Stochastic Convex Optimization (DP-SCO) \cite{fkt20,bfgt20,bftt19,kll21,afkt21,bgn21}, a closely associated problem to DP-ERM, seeks to minimize the function $\E_{z\sim {\cal P}}[\ell(\theta;z)]$ given some underlying distribution ${\cal P}$. DP-SCO's tight bound typically constitutes the maximum informational lower bound on (non-private) SCO and the lower bound on DP-ERM, so improved lower bounds on DP-ERM can further enhance DP-SCO research.

There has been emerging interest in DP-ERM within non-Euclidean settings. Most prior studies considered the constrained Euclidean context, where the convex domain and (sub)gradients of objective functions possess bounded $\ell_2$ norms. In contrast, DP-ERM concerning the general $\ell_p$ norm is relatively under-explored. Driven by the significance and broad applicability of non-Euclidean settings, prior works \cite{ttz15,afkt21,bgn21,bgm21,HLL+22,GLLST23} have scrutinized constrained DP-ERM and DP-SCO with respect to the general $\ell_p$ norm, yielding a myriad of intriguing results. 
However, there are still gaps between the current upper and lower bounds demonstrated in the paper when $p>2$.

Recently, driven by the need for private fine-tuning of large models, research has shifted towards differentially private algorithms employing zeroth-order oracles. \cite{ZTOH23} investigated the private minimization of gradient norms for non-convex smooth objectives with function value evaluations under a modified low-rank assumption.
\cite{TPN+24} proposed a DP-ERM algorithm with zeroth order oracles but only analyzed its privacy guarantee and empirical performance without theoretical risk bounds.

\section{Rank-dependent upper bound via sampling}\label{sec sam}

% In the setting of DP-ERM with constrained domain, traditional risk bounds have an inherent dependence on the ambient dimension $d$. As established in \cite{bst14,su15,wyx17,bftt19}, the optimal risk bound is $\Theta(\frac{\sqrt{d\log(1/\delta)}}{\epsilon n})$. However, the $\sqrt{d}$ dependency becomes increasingly problematic with the advent of large-scale models in modern machine learning, particularly in large-scale neural networks with potentially billions of parameters and high-dimensional data. In these cases, adhering to traditional privacy methods would render the bound impractically large.

% This challenge has led to an increased focus on achieving dimension-free risk bounds. While earlier works such as \cite{sstt21,LLH+22} have successfully obtained rank-dependent risk bounds under the assumption of low-rank gradients, they have generally relied on gradient-based algorithms. In contrast, our upper bound result demonstrates that a simple sampling-based algorithm can attain comparable rank-dependent bounds using only zeroth-order oracles and without the smoothness assumption.

We present the rank-dependent upper bound by sampling from exponential mechanism in this section.
Our approach is grounded in the following standard assumption on the low-rank structure of objective functions, which is employed by \cite{LLH+22}:

\begin{assumption}[Restricted Lipschitz Continuity]
\label{assm:rank}
For any $s$, $\ell(\theta;z)$ is convex and $G$-Lipschitz over $\theta\in\R^d$.
For each $k\in[d]$,
there exists an orthogonal projection matrix $P_k$ with rank $k$ such that 
\begin{align*}
    \|(I-P_k)\nabla \ell(\theta;z)\|_2\le G_k ,\forall \theta,\forall z,
\end{align*}
where the (sub)gradient is taken over $\theta$.
\end{assumption}
It is evident that $G=G_0\ge G_1\ge\cdots\ge G_d=0$.
An example of $P_k$ is a diagonal matrix such that the first $k$ diagonal entries are 1, and others are 0.
This means the $\ell_2$ norm of the last $d-k$ dimensions of $\nabla \ell(\theta;z)$ is bounded by $G_k$.

\cite{sstt21} introduced the low-rank assumption which is equivalent to assuming $G_{\mathrm{rank}}=0$. This assumption, however, was later recognized as potentially overly restrictive. 
Consequently, it was relaxed to a more flexible version, i.e., Assumption~\ref{assm:rank} by \cite{LLH+22}. 
To substantiate this relaxed assumption, \cite{LLH+22} conducted multiple experiments, including Principal Component Analysis (PCA) and fine-tuning models within the principal subspace of reduced dimensions, demonstrating that these models can achieve performance comparable to their original higher-dimensional counterparts. 
We direct readers to the work of \cite{LLH+22} for a comprehensive discussion of the assumption and findings.

\begin{algorithm}[tb]
\caption{The Regularized Exponential Mechanism} 
\begin{algorithmic}
\STATE {\bf Inputs:}  parameters $\epsilon,\delta,C$, Restricted Lipschitz Continuity parameters $\{G_k\}$, dataset $\D$
% \IF{$\delta=0$}
% \STATE Set $\eta=\frac{n\epsilon}{2\Delta},\mu=\frac{k}{\eta C^2}$
% \STATE Sample $\theta^{pure}$ from distribution prop to $\exp(-\eta(L(\theta;\D)+\frac{\mu}{2}\|\theta\|_2^2))$
% \STATE {\bf Output:} $\theta^{pure}$
% \ELSIF{$\delta>0$}
\STATE Set $\eta=\frac{n\epsilon\sqrt{k\log(1/\delta)}}{GC},\mu=\frac{8\eta G^2}{n^2\epsilon^2}$
\STATE Sample $\theta^{app}$ prop to $\exp(-\eta(L(\theta;\D)+\frac{\mu}{2}\|\theta\|_2^2))$
\STATE {\bf Output:} $\theta^{app}$
% \ENDIF
\end{algorithmic} 
\label{alg exp mec}
\end{algorithm}

% Additionally, we assume bounded sensitivity for pure-DP.
% This assumption is necessary for pure-DP in the unconstrained domain.

% \begin{assumption}[Bounded sensitivity]
% \label{assump:func_difference}
% we assume the difference $|\ell(\theta;z)-\ell(\theta;z')|\le \Delta$ for any $z,z'$ and $\theta\in\R^d$. 
% \end{assumption}

% We show that the exponential mechanism (Algorithm \ref{alg exp mec}) yields an $O(\frac{\text{rank}}{n\epsilon})$ upper bound for pure-DP. To the best of our knowledge, this is the first dimension-free result for pure-DP. 

% \begin{theorem}[pure-DP]\label{thm main sam}
% Under Assumption~\ref{assm:rank} and~\ref{assump:func_difference}, if $G_k\le\frac{\Delta\sqrt{k}}{\sqrt{d}Cn\epsilon}$, setting $\eta=\frac{n\epsilon}{2\Delta},\mu= \frac{k}{\eta C^2}$, sampling $\theta^{pure}$ from the distribution proportional to $\exp(-\eta(L(\theta;\D)+\frac{\mu}{2}\|\theta\|^2_2))$ as in Algorithm \ref{alg exp mec} is $(\epsilon,0)$-DP, and
% \begin{align*}
%     \E[L(\theta^{pure};\D)-L(\theta^*;\D)]\lesssim \frac{\Delta k}{n\epsilon},
% \end{align*}
% where $\theta^*=\arg\min L(\theta;\D)$ and  $C\ge \|\theta^*\|_2$.
% \end{theorem}

% \begin{remark}
% When $\Delta\le GC$, we get the bound $O(\frac{GCk}{n\epsilon})$ in unconstrained case, which is the dimension-independent analog of $O(\frac{GCd}{n\epsilon})$ in the constrained case.
% \end{remark}

We then present our main upper bound result, an $\tilde{O}(\frac{\sqrt{\text{rank}}}{n\epsilon})$ risk bound that matches those of \cite{sstt21,LLH+22}.  

\begin{theorem}[Approximate-DP]
\label{thm main app}
    Under Assumption~\ref{assump:unbounded} and Assumption~\ref{assm:rank}, for $\epsilon,\delta\in (0,1/2)$, if for some $k\in[d]$ such that $G_k\le \frac{G}{n\epsilon\sqrt{d}}$, setting $\eta=\frac{ n \epsilon\sqrt{k\log(1/\delta)}}{GC}$ and $\mu=\frac{8\eta G^2}{n^2\epsilon^2}$, sampling $\theta^{app}$ with probability proportional to $\exp(-\eta(L(\theta;\D)+\mu\|\theta\|^2_2/2))$ as in Algorithm \ref{alg exp mec} is $(\epsilon,\delta)$-DP, and
    \begin{align*}
        \E[L(\theta^{app};\D)-L(\theta^*;\D)]\lesssim \frac{GC\sqrt{k\log(1/\delta)}}{n\epsilon},
    \end{align*}
    where $\theta^*=\arg\min L(\theta;\D)$. In particular, in expectation, only $O(n^2\epsilon^2\log^2(nd/\delta))$ calls to the zero-th order oracle is required.
\end{theorem}

The risk bounds in the above theorem is dimension-free, depending on the rank $k$ instead of the ambient dimension $d$. Meanwhile, Algorithm \ref{alg exp mec} uses only zero-th order oracles and doesn't require smoothness of the functions. As a comparison, \cite{sstt21,LLH+22} are both based on DP-SGD and require first-order oracles, while \cite{ZTOH23} targets a different problem and requires smoothness of objectives. In addition, our algorithm is efficient to implement as well for its $\Tilde{O}(n^2\epsilon^2)$ oracle complexity.

\iffalse
The work of \cite{LLH+22} demonstrates the capacity to tolerate a larger $G_k$ while still achieving the same dimension-independent outcome. It is an interesting problem to determine whether the exponential mechanism can accommodate the same, or potentially even larger, $G_k$ as compared to their work, and moreover, to identify the maximum $G_k$ that can be permitted while still ensuring a dimension-independent result. \zl{i don't quite understand this paragraph}
\daogao{The larger $G_k$ we can tolerate, the stronger the results are.}
\fi

%%%%%%%%%%%%%%%%%%%%%%Proof

% \subsection{Proof of Theorem \ref{thm main app}} 
The privacy guarantee and computation complexity are mostly based on previous work \cite{gll22}, which studies regularized exponential mechanisms in the classic setting: constrained domain with full-rank gradients.

The challenge is demonstrating the utility bound.
Our method is based on analyzing the variance of the sampling method.
If we sample $x$ from the distribution $\pi(x)\propto \exp(-\eta f(x))$ for some convex function $f$ under the low-rank assumption (Assumption~\ref{assump:rank-gradient}), it is straightforward to show $\E_{x\sim \pi}f(x)-f(x^*)\le \rank/\eta$.
However, if we relax the low-rank assumption to Restricted Lipschitz Continuity (Assumption~\ref{assm:rank}), the trivial argument does not work directly.
Moreover, to make the mechanism satisfy the approximate DP, we need to add some strongly convex regularizer to the objective function, as demonstrated in Algorithm~\ref{alg exp mec}.

Lemma~\ref{lm:sampling_utility} is our main technical lemma to bound the utility.
We begin with a helpful lemma on the intrinsic property of the sampling method.

\begin{restatable}{lemma}{errbyvar}
    \label{lm:err_by_var}
    For a convex function $f$ with global minimum point $x^*$, let $\pi$ be the distribution proportional to $\exp(-f(x)-\frac{\mu}{2}\|x\|^2)$. Then we have 
\begin{align*}
    \E_{x\sim \pi}f(x)=f(x^*)+\int_{1}^{\infty}\Var_{x\sim \pi_t}(f(x))\d t,
\end{align*}
where $\Var_{x\sim \pi_t}$ is the variance under the distribution $\pi_t\propto\exp(-tf(x)-\frac{\mu}{2}\|x\|^2)$.
\end{restatable}

As a result, to get the utility guarantee of the sampling mechanism, it suffices to bound the variance $\Var_{x\sim \pi_t}(f)$.
The standard approach for bounding the variance, unfortunately, involves dependence on dimension:

\begin{lemma}[Theorem 3 in \cite{Che21}]
\label{lm:var_by_dim}
Let $f$ be a convex function on $\R^d$ and $\pi$ be the distribution proportional to $\exp(-f(x))$, then we have
\begin{align*}
    \Var_{x\sim\pi}f(x)\leq d.
\end{align*}
\end{lemma}

To ensure the objective density is well-defined in the unconstrained case (whose support is the whole space $\R^d$), we add a regularizer term, and bound the variance under this regularized strongly log-concave density.
\begin{restatable}{lemma}{sampleone}
\label{lm:regular_var_by_dim}
Let $\pi$ be the distribution given by $\exp(-f(x)-\frac{\mu}{2}\|x\|^2)$ on $\R^d$. One has
\begin{align*}
    \Var_{x\sim \pi}f(x)\leq 4d+\frac{\mu}{2}\|\ox\|^2,
\end{align*}
where $\ox=\E_{x\sim \pi}x$.
\end{restatable}

There is a dimension dependence in Lemma~\ref{lm:regular_var_by_dim}, which is undesirable.
To fully eliminate the dimension dependence in Lemma~\ref{lm:regular_var_by_dim}, we first derive a new lemma that bounds the variance by dimension and gradient.
It is standard to bound the term $\E_{x\sim\pi}\|x-x^*\|_2^2$ by $d/\mu$, for example, see \cite{DM16} and references therein.
We modify the previous lemmas and bound $\E_{x\sim\pi}\|Q(x-x^*)\|^2_2$ instead.

\begin{restatable}{lemma}{sampletwo}
\label{lm:bound_first_k_dim}
Let $x^*=\arg\min_{x}f(x)+\frac{\mu}{2}\|x\|^2_2$ and $\pi$ be the distribution proportional to $\exp(-f(x)-\frac{\mu}{2}\|x\|^2_2)$.
Letting $Q$ be the projection matrix to the first $k$ coordinates,
we have
\begin{align*}
    \E_{x\sim\pi}\|Q(x-x^*)\|^2_2\leq k/\mu.
\end{align*}
\end{restatable}

For simplicity, we use $a\lesssim b$ to represent that $a=O(b)$ in the following statements.
Recall Assumption \ref{assm:rank},
by rotating the space, we can rewrite $x=(x_{1},x_{2})$ where $x_{1}\in\R^{k}$ and $x_{2}\in\R^{d-k}$ and that $\|\nabla_{2}f(x)\|_{2}\leq G_{k}\text{ for all }x$, where $\nabla_2$ is the gradient on the direction of the block $x_2$.

We decompose the variance $\Var_{(x_1,x_2)\sim\pi}f(x)$ as
\begin{align*}
\E_{x_2\sim\pi}\Var_{x_1\mid x_2\sim\pi}f(x)
    +\Var_{x_2\sim\pi}(\E_{x_1\mid x_2\sim\pi}f(x)),
\end{align*}
where $x_1\mid x_2$ means the distribution of $x_1$ conditional on $x_2$, which is $k$-dimensional.
Hence we can bound the first term, $\Var_{x_1\mid x_2}$ with dependence on $k$.
Through a careful analysis which demonstrates the second term is zero, we get the following rank-dependent bound on variance.

\begin{restatable}{lemma}{samplethree}
\label{lm:var_k}
Suppose $f(x)$ is convex and satisfies Assumption \ref{assm:rank}, and suppose $\pi$ is the distribution proportional to $\exp(-f(x)-\frac{\mu}{2}\|x\|^2_2)$,
we have that
\begin{align*}
    \Var_{x\sim\pi}f(x)\lesssim
    (\frac{G_k^2}{\mu}+1)(k+\mu\|x^*\|^2_2),
\end{align*}
where $x^*=\arg\min_{x}f(x)$.
\end{restatable}

Applying Lemma~\ref{lm:err_by_var} and Lemma~\ref{lm:var_k}, it is immediate to get the key technical lemma.
\begin{restatable}{lemma}{samplefour}
\label{lm:sampling_utility}
Given $t>0$ and let $p(x)$ be the distribution proportional to $\exp\big(-\eta(f(x)+\frac{\mu}{2}\|x\|^2_2)\big)$, we have
\begin{align*}
    &\E_{x\sim p}f(x)-\min_{x}f(x)\lesssim  
    \mu\|x^*\|^2+ \int_1^{\infty}\min_k\Big\{\frac{G_k^2}{\mu}(k+\eta\mu\cdot\|x^*\|^2)+\frac{k}{\eta t^2}\Big\}\d t.
\end{align*}

where $x^*=\arg\min_{x}f(x)$.
\end{restatable}
The utility guarantee of Theorem~\ref{thm main app} follows directly from Lemma~\ref{lm:sampling_utility}.
Basically, when $G_k$ is small enough, then the error term depending on $G_k$ will be negligible, and we get the optimal excess risk bound.
We defer the omitted proof to the Appendix~\ref{appendix:omitted_proof_upper_bound}.

% We are now ready to prove Theorem \ref{thm main sam}. The privacy guarantee follows from the classic exponential mechanism (\cite{dwork2014algorithmic}).
% The utility guarantee follows from Lemma~\ref{lm:sampling_utility}.
% Specifically, we have

% \begin{align*}
%     &\E[L(\theta^{pure};\D)-L(\theta^*;\D)]\\
%     \lesssim & ~ \mu\|x^*\|^2+ \int_1^{\infty}\min_k\{\frac{G_k^2}{\mu}(k+\eta\mu\cdot\|x^*\|^2)+\frac{k}{\eta t^2}\}\d t\\
%     \lesssim & ~ \mu C^2+\int_1^{d}\min_k\{\frac{G_k^2}{\mu}(k+\eta\mu\cdot\|x^*\|^2)+\frac{k}{\eta t^2}\}\d t \\
%     +& \int_{d}^{\infty}\frac{d}{\eta t^2}\d t
%     \lesssim  \frac{\Delta k}{n\epsilon}+ \frac{dG_k^2k}{\mu}\lesssim \frac{\Delta k}{n\epsilon}+\frac{dG_k^2C^2n\epsilon}{\Delta}.
% \end{align*}
% Taking in the precondition of $G_k$ finishes the proof.

%%%%%%%%%%%%%%%%%%%%%%Proof

% We will make use of the following result.
\section{Lower bound for the unconstrained setting}
In the study of dimension-free risk in DP-ERM, much of the focus has been on establishing positive results, particularly in the form of upper bounds like those presented in this work and others \cite{sstt21,LLH+22}. However, to fully grasp the scope and limitations of dimension-free risk bounds, it's essential to investigate both their potential and inherent constraints. Particularly, existing upper bounds, including our own, rely on two key assumptions: (1) low-rank gradients (Restricted Lipschitz Continuity); (2) unconstrained domain, to evade the $\sqrt{d}$ dependence in the constrained setting. 
% The exact roles of these assumptions in achieving dimension-free risk bounds remain unclear, requiring further exploration.

% A key question emerges: is the presence of only one of these assumptions, either low-rank gradients or an unconstrained domain, sufficient to achieve dimension-free risk bounds? 
% Prior studies have already shown the inadequacy of solely relying on low-rank gradients \cite{bst14}. 
We now turn our attention to examining the role of the unconstrained domain assumption, by showing that there is no separation between the constrained and unconstrained domain assumptions when the gradients are full-rank.
% relying solely on this assumption also leads to the same $\sqrt{d}$ dependency as in the constrained setting. 
Formally, we have the following lower bound for the unconstrained setting:

\begin{restatable}{theorem}{appmain}
\label{thm app main}
Let $n,d$ be large enough and $1\geq \epsilon>0,2^{-O(n)}<\delta<o(1/n)$ and $p\ge 1$. 
There exists $G$-Lipschitz convex loss functions $\ell$, such that for every $(\epsilon,\delta)$-differentially private algorithm with output $\theta^{priv}\in \mathbb{R}^d$,  there is a data-set $\D=\{z_1,...,z_n\}\subset \{0,1\}^d \cup \{\frac{1}{2}\}^d$ such that
$$
    \E[L(\theta^{priv};\D)-L(\theta^{\star};\D)]=\Omega(\min(1,\frac{\sqrt{d\log(1/\delta)} }{n\epsilon }) GC),
$$
where $\theta^{\star}$ is a minimizer of $L(\theta;\D)$ and $C=\|\theta^{\star}\|$. Both $G,C$ are defined w.r.t any $\ell_p$ geometry with $p\ge 1$.
\end{restatable}

% Consequently, we conclude that the unconstrained domain assumption, in isolation, is inadequate for achieving dimension-free risk bounds.
We obtain this result by a general black-box reduction method. In addition to the applicability to the unconstrained case, our bound is also stronger than previous ones and can be applied to general $\ell_p$ geometry.

Theorem \ref{thm app main} is a direct consequence of two separate results (Theorem~\ref{thm4} and Theorem~\ref{thm2}), detailed in the following subsections. The first part is the black-box reduction from the unconstrained case to the constrained case. Via an extension of Lipshitz convex functions from constrained to unconstrained domain, we show that DP-ERM on the extended function is as hard as the original one.

The second part is an improved lower bound in the constrained setting. For the lower bound construction, we use an $\ell_{\infty}$ ball as the domain and select the $\ell_1$ loss function $\ell(\theta;z)=|\theta-z|_1$, and improve the previous lower bound via the group privacy technique. The choice of the norms on the domain and loss function makes it applicable for general $\ell_p$ geometry with $p\ge 1$.

\subsection{General lower bound by reduction}\label{sec:reduction}
In this section, we present a general black-box reduction method that effectively extends any DP-ERM risk lower bound from a constrained scenario to an unconstrained one. As a case in point, which we detail in the appendix, we utilize our reduction approach to obtain a pure-DP lower bound in the unconstrained setting from the constrained case result \cite{bst14}.

Our result relies on the following key lemma from \cite{cobzas1978norm}, which provides a Lipschitz extension of any convex Lipschitz function from a bounded convex set to the entirety of the domain $\R^d$.

\begin{lemma}[Theorem 1 in \cite{cobzas1978norm}]\label{lem4}
Let $f$ be a convex function which is $\eta$-Lipschitz w.r.t. $\ell_2$ and  defined on a convex bounded set $\K\subset \R^d$.
Define an auxiliary function $g_y(x)$ as:
\begin{equation}
    g_y(x):=f(y)+\eta \|x-y\|_2, y\in \K, \forall x\in \R^d.
\end{equation}
Then consider the function $\tilde{f}:\R^d\rightarrow \R$ defined as $\tilde{f}(x):=\min_{y\in \K} g_y(x)$.
We know $\tilde{f}$ is $\eta$-Lipschitz w.r.t. $\ell_2$ on $\R^d$, and $\tilde{f}(x)=f(x)$ for any $x\in\K$.
\end{lemma}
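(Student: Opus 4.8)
The idea is to read $\tilde f$ as the lower envelope of the family $\{g_y\}_{y\in\K}$ (equivalently, the infimal convolution of $f$, extended by $+\infty$ outside $\K$, with the function $\eta\|\cdot\|_2$) and to verify each claimed property by elementary manipulations using the triangle inequality and the Lipschitz/convexity hypotheses on $f$.

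First I would confirm that $\tilde f(x)$ is well-defined and real-valued for every $x\in\R^d$, and that the minimum is actually attained. Fixing an arbitrary reference point $y_0\in\K$ and using $\eta$-Lipschitzness of $f$ together with $\|x-y\|_2\ge\|x-y_0\|_2-\|y-y_0\|_2$, one obtains $g_y(x)\ge f(y_0)+\eta\|x-y_0\|_2-2\eta\,\mathrm{diam}(\K)$ uniformly in $y\in\K$, so the infimum defining $\tilde f(x)$ is bounded below; since $\K$ is bounded (and, to justify writing $\min$ rather than $\inf$, closed, hence compact) and $y\mapsto g_y(x)$ is continuous, the infimum is attained.

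Next, the extension property $\tilde f|_\K=f$ is immediate in both directions: taking $y=x$ shows $\tilde f(x)\le g_x(x)=f(x)$ for $x\in\K$, while for any $y\in\K$ the Lipschitz bound $f(y)\ge f(x)-\eta\|x-y\|_2$ gives $g_y(x)\ge f(x)$, hence $\tilde f(x)\ge f(x)$. For the $\eta$-Lipschitz property on all of $\R^d$: for $x_1,x_2\in\R^d$ and any $y\in\K$, the triangle inequality gives $g_y(x_1)\le g_y(x_2)+\eta\|x_1-x_2\|_2$; taking the infimum over $y\in\K$ yields $\tilde f(x_1)\le\tilde f(x_2)+\eta\|x_1-x_2\|_2$, and swapping $x_1,x_2$ finishes it.

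Finally I would also record convexity of $\tilde f$ (needed when this lemma is applied inside the reduction, even though it is not stated explicitly): given $x_1,x_2$ and $\lambda\in[0,1]$, choose $y_i\in\K$ with $g_{y_i}(x_i)\le\tilde f(x_i)+\varepsilon$, set $y=\lambda y_1+(1-\lambda)y_2\in\K$ by convexity of $\K$, and combine convexity of $f$ with the triangle inequality for $\|\cdot\|_2$ to get $\tilde f(\lambda x_1+(1-\lambda)x_2)\le g_y(\lambda x_1+(1-\lambda)x_2)\le\lambda g_{y_1}(x_1)+(1-\lambda)g_{y_2}(x_2)\le\lambda\tilde f(x_1)+(1-\lambda)\tilde f(x_2)+\varepsilon$, then let $\varepsilon\to0$. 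I do not expect any genuine obstacle here — the one point that deserves care is the finiteness and attainment of the infimum, which is exactly where boundedness (compactness) of $\K$ enters; and I note that every step uses only homogeneity and the triangle inequality of the norm, so replacing $\|\cdot\|_2$ by $\|\cdot\|_p$ gives the analogous $\ell_p$ extension used later in the paper.
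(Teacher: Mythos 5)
The paper does not prove Lemma~\ref{lem4} at all: it is imported wholesale from \cite{cobzas1978norm} (Theorem~1 there) and used as a black box, so there is no "paper proof" to compare against. Your self-contained derivation via the infimal-convolution viewpoint is correct: the lower bound $g_y(x)\ge f(y_0)+\eta\|x-y_0\|_2-2\eta\,\mathrm{diam}(\K)$ gives finiteness, taking $y=x$ plus the Lipschitz bound $g_y(x)\ge f(x)$ gives the extension property on $\K$, and the $y$-by-$y$ triangle inequality $g_y(x_1)\le g_y(x_2)+\eta\|x_1-x_2\|_2$ followed by an infimum gives the Lipschitz property. All three steps are the standard ones.

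Two of your side remarks are genuinely useful and worth keeping. First, the lemma as stated in the paper asserts only Lipschitzness and agreement on $\K$, yet the very next sentence in the paper ("which is convex, $G$-Lipschitz and equal to $\ell(\theta;z)$ ... by Lemma~\ref{lem4}") also invokes convexity of $\tilde\ell$, which Lemma~\ref{lem4} never claims. Your $\varepsilon$-argument, choosing near-minimizers $y_1,y_2$ and using $y=\lambda y_1+(1-\lambda)y_2\in\K$, correctly supplies the missing piece; without it the chain of reasoning in the reduction is incomplete as written. Second, you are right that "bounded" alone only justifies $\inf$; to write $\min$ one needs $\K$ closed (hence compact). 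This is harmless here since the paper instantiates $\K$ as a closed $\ell_2$ ball, but it is an honest precision the lemma statement glosses over. Your final observation, that every step uses only homogeneity and the triangle inequality and therefore carries over verbatim to any norm, matches how the paper later leans on the same extension in $\ell_p$ geometry.
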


For any $y\in\R^d$, we define $\Pi_{\K}(y):=\arg\min_{x\in \K}\|x-y\|_2$.
It is well-known in the convex analysis, that for a compact convex set $\K$ and any point $y\in\R^d$, the the set $\{x\in\K:\|x-y\|_2<\|z-y\|_2,\forall z\in \K,z\ne x\}$ is always non-empty and singleton \cite{hazan2019introduction}.

The main idea of our reduction result is that, we can extend the ``hard" loss function for any lower bound in the constrained setting to $\R^d$ using the above lemma, then show the same bound still holds. An important observation on such convex extension is that the loss $L(\theta;\D)$ value at a point $\theta$ does not increase after projecting $\theta$ onto the convex domain $\K$, i.e., $L(\theta;\D) \ge L(\Pi_{\K}(\theta);\D)$. This property can be derived from the Pythagorean Theorem (Lemma~\ref{lm:Pythagorean}) for any convex set, in combination with the specific structure of the extension.

% In the unconstrained case, we typically assume a public prior knowledge on $C$ where $C\geq \|\theta_0-\theta^*\|_2$, $\theta_0$ is the public initial point and $\theta^*$ is the optimal solution to $L(\cdot;\D)$ over $\R^d$.

We define a 'witness function' for any lower bound in the constrained setting, to serve as the black-box. For example, in \cite{bst14} the (witness) loss function is simply linear and the lower bound is roughly $\Omega(\min \{1, \frac{\sqrt{d}}{n\epsilon}\})$.

\begin{definition}\label{prop1}
Let $n,d$ be large enough, $0\leq \delta\leq 1$ and $\epsilon>0$. We say functions $\ell$ is a witness to the lower bound function $f$, if for any $(\epsilon,\delta)$-DP algorithm,
there exist a convex set $\K\subset \R^d$ of diameter $C$, a family of $G$-Lipschitz convex functions $\ell(\theta;z)$ defined on $\K$ w.r.t. $\ell_2$, a dataset $\D$ of size $n$, such that with probability at least $1/2$ (over the random coins of the algorithm),
\begin{align*}
    L(\theta^{priv};\D)-\min_{\theta\in \cK}L(\theta;\D)=\Omega(f(d,n,\epsilon,\delta,G,C)),
\end{align*}
where $L(\theta;\D):=\frac{1}{n}\sum_{i=1}^{n}\ell(\theta;z_i)$ and $\theta^{priv}\in \K$ is the output of the algorithm.
\end{definition}
The function $f$ can be any lower bound in the constrained case with dependence on the parameters, and $\ell$ is the loss function used to construct the lower bound. 
We use the Lipschitz extension mentioned above to define our new loss function in the unconstrained case, i.e.,
\begin{equation}
    \tilde{\ell}(\theta;z)=\min_{y\in \K} \ell(y;z) +G\|\theta-y\|_2
\end{equation}
which is convex, G-Lipschitz and equal to $\ell(\theta;z)$ when $\theta\in \K$ by Lemma~\ref{lem4}. 
Our intuition is simple: if $\theta^{priv}$ lies in $\K$, then we are done by using the witness function and lower bound from Definition \ref{prop1}. If not, the projection of $\theta^{priv}$ to $\K$ should lead to a smaller loss. However, the projected point cannot have a minimal loss due to the lower bound in Definition \ref{prop1}, let alone $\theta^{priv}$ itself. As a consequence, we obtain the following theorem on the reduction from unconstrained to constrained.

\begin{restatable}{theorem}{reductionthm}
\label{thm4}
Assume $\ell, f$ are the witness function and lower bound as in Definition \ref{prop1}.
For any $(\epsilon,\delta)$-DP algorithm and any initial point $\theta_0\in\R^d$, there exist a family of $G$-Lipschitz convex functions $\tilde{\ell}(\theta;z):\R^d\rightarrow \R$ being the $\ell$ from Definition \ref{prop1}, a dataset $\D$ of size n and the same function $f$, such that with probability at least 1/2 (over the random coins of the algorithm)
\begin{equation}
    \tilde{L}(\theta^{priv};\D)-\tilde{L}(\theta^*;\D)=\Omega(f(d,n,\epsilon,\delta,G,C)),
\end{equation}
where $\tilde{L}(\theta;\D):=\frac{1}{n}\sum_{z_i\in\D}\tilde{\ell}(\theta;z_i)$ is the ERM objective function, $\theta^*=\arg\min_{\theta\in\R^d}\tilde{L}(\theta;\D)$, $C\geq \|\theta_0-\theta^*\|_2$ and $\theta^{priv}$ is the output of the algorithm.
\end{restatable}

Theorem \ref{thm4} shows that unconstrained DP-ERM is as hard as its constrained counterpart, and as a result it's impossible to achieve dimension-independent upper bounds in general without further assumptions. As an example, the low-rank Assumption \ref{assm:rank} is essential to our rank-dependent upper bound Theorem \ref{thm main app}.

\subsection{Improved lower bound}\label{sec:improvebound}
% \zl{cut short}
In this part, we improve the lower bounds for approximate DP. Our goal is twofold: to tighten the previous lower bounds and to extend this boundary to encompass any non-euclidean geometry and the unconstrained case. We assume that $2^{-O(n)}<\delta<o(1/n)$. The supposition concerning $\delta$ is standard in the literature, as seen, for instance, in \cite{su15}.

\paragraph{Motivation and main idea}
Previous works in the constrained case \cite{bst14,su15}  fail in the unconstrained and non-euclidean case for two reasons.
First, they rely on the $\ell_2$ ball as the domain, which lacks the generalizability to the general $\ell_p$ norm.
Second, to generalize the lower bound to the unconstrained case, linear functions are no longer appropriate to be loss functions, as they can take minus infinity values and lack a global minimum.

To circumvent these issues, we consider an $\ell_\infty$ ball as the domain and select the loss function $\ell(\theta;z)=|\theta-z|_1$. Formally, the loss function is defined as follows:
\begin{align*}
    \ell(\theta;z)=\|\theta-z\|_1, \theta\in\R^d,z\in \{-1,1\}^d.
\end{align*}
The convex domain $\K$ is the $\ell_\infty$ unit ball. For any data-set $\D=\{z_1,...,z_n\}$, the loss function is
\begin{align*}
    L(\theta;\D)=\frac{1}{|\cD|}\sum_{i=1}^{|\cD|} \ell(\theta;z_i)=\frac{1}{|\cD|}\sum_{i=1}^{|\cD|}\|\theta-z_i\|_1.
\end{align*} 
Our rationale for this choice is twofold. Firstly, $\ell_1$ and $\ell_{\infty}$ serve as the "strongest" norms for loss and domain, respectively, implying lower bounds for general $\ell_p$ geometry by the Holder inequality. Secondly, the $\ell_1$ loss function can be directly generalized to the unconstrained case.

The technical difficulty of the unconstrained case lies in the fact that we can no longer straightforwardly reduce the lower bound of the DP-ERM to the lower bound of mean estimation, a strategy adopted by previous works. Specifically, a large mean estimation error does not necessarily result in a large empirical risk.

Consider a simple example. Recall that we want to minimize $L(\theta;\D)=\sum_{i=1}^{n}\ell(\theta;z_i)/n$ over the $\ell_\infty$ unit ball $\K$, where $\ell(\theta;z)=\|\theta-z\|_1$ and each $z_i\in\{0,1\}^d$ as the set up before.
If $\frac{1}{n}\sum_{i=1}^{n}z_i=\frac{1}{2}\mathbf{1}$ where $\mathbf{1}$ is the all-one vector, then $L(\theta;\D)$ is a constant function, equal to $d/2$ for any $\theta\in\K$.
In this example, for a bad estimator $\theta_{\mathrm{bad}}$, even if $\|\theta_{\mathrm{bad}}-\frac{1}{n}\sum_{i=1}^{n}z_i\|_2$ is large, it can still be a minimizer to the loss function, i.e., $L(\theta_{\mathrm{bad}};\cD)-\min_{\theta\in\K}L(\theta;\cD)=0$.

\paragraph{Main result in Euclidean geometry}
Similar to \cite{buv18}, we have the following standard lemma, which allows us to reduce any $\epsilon<1$ to the $\epsilon=1$ case without losing generality. The proof is based on the well-known 'secrecy of the sample' lemma from \cite{kasiviswanathan2011can}.

\begin{restatable}{lemma}{apptwo}
\label{lemep}
For $0<\epsilon<1$, a condition $Q$ has sample complexity $n^*$ for algorithms with $(1,o(1/n))$-differential privacy ($n^*$ is the smallest sample size that there exists an $(1,o(1/n))$-differentially private algorithm $\A$ which satisfies $Q$), if and only if it also has sample complexity $\Theta(n^*/\epsilon)$ for algorithms with $(\epsilon,o(1/n))$-differential privacy.
\end{restatable}

We apply the group privacy technique in \cite{su15}, based on the following technical lemma:

\begin{restatable}{lemma}{error_by_appending}
\label{lm:error_by_appending}
Let $n,k$ be two large positive integers such that $k<n/1000$.
Let $n_k=\lfloor n/k \rfloor$.
Let $z_1,\cdots,z_{n_k}$ be $n_k$ numbers where $z_i\in\{0,1,1/2\}$ for all $i\in[n_k]$.
For any real value $q\in[0,1]$, if we copy each $z_i$ $k$ times, and append $n-kn_k$ '0' to get $n$ numbers $z_1',\cdots,z_n'$, then we have
\begin{align*}
    |\sum_{i=1}^{n_k}|q-z_i|/n_k-\sum_{i=1}^{n}|q-z_i'|/n|\le 3k/n.
\end{align*}    
\end{restatable}
This lemma bounds the average absolute distance of $q$ between $\{z_i\}$ and $\{z_i'\}$. 
For the construction of our lower bound, we will copy a small dataset a few times and append '0' via this lemma.

The following theorem presents the improved lower bound we obtain, which modifies and generalizes the techniques in \cite{su15,bst14} to reach a tighter bound for the unconstrained case. 

\begin{restatable}[Lower bound for $(\epsilon,\delta)$-differentially private algorithms]{theorem}{appthree}
\label{thm2}
Let $n,d$ be large enough and $1\geq \epsilon>0,2^{-O(n)}<\delta<o(1/n)$. For every $(\epsilon,\delta)$-differentially private algorithm with output $\theta^{priv}\in \R^d$,  there is a data-set $\D=\{z_1,...,z_n\}\subset \{0,1\}^d \cup \{\frac{1}{2}\}^d$ such that
\begin{equation}
    \E[L(\theta^{priv};\D)-L(\theta^{\star};\D)]=\Omega(\min(1,\frac{\sqrt{d\log(1/\delta)} }{n\epsilon }) GC)
\end{equation}

where $\ell$ is G-Lipschitz w.r.t. $\ell_2$ geometry, $\theta^{\star}$ is a minimizer of $L(\theta;\D)$, and $C=\sqrt{d}$ is the diameter of $\K$ w.r.t. $\ell_2$ geometry,
where $\K$ is the unit $\ell_\infty$ ball containing all possible true minimizers and differs from its usual definition in the constrained setting.
\end{restatable}

\begin{remark}
The dependence on parameters $GC$ is standard. For example, one can scale the loss function to be $\hat{\ell}(x;z)=\|ax-z\|_1$ for some constant $a\in (0,1)$, which decreases Lipschitz constant $G$ but increases the diameter $C$ (we should choose $\K$ to contain all possible minimizes).
\end{remark}
 
This bound improves a log factor over \cite{bgn21} and can be directly extended to the constrained bounded setting, by setting the constrained domain to be the unit $\ell_\infty$ ball.
\paragraph{Extension to non-Euclidean geometry}
We illustrate the power of our construction in Theorem~\ref{thm2},
by showing that the same bound holds for any $\ell_p$ geometry where $p\ge 1$ in the constrained setting, and the bound is tight 
for all $1< p\le 2$, improving/generalizing existing results in \cite{afkt21,bgn21}. 

Our construction is advantageous in that it uses $\ell_1$ loss and $\ell_{\infty}$-ball-like domain in the constrained setting, both being the strongest in their direction when relaxing to $\ell_p$ geometry. Simply using the Holder inequality yields that the product of the Lipschitz constant $G$ and the diameter of the domain $C$ is equal to $d$ when $p$ varies in $[1,\infty)$.

\iffalse
\begin{theorem}\label{thm3}
Let $n,d$ be large enough and $1\geq \epsilon>0,2^{-O(n)}<\delta<o(1/n)$ and $p\ge 1$. 
There exists a convex set $\K\subset\R^d$, such that for every $(\epsilon,\delta)$-differentially private algorithm with output $\theta^{priv}\in \K$,  there is a data-set $\D=\{z_1,...,z_n\}\subset \{0,1\}^d \cup \{\frac{1}{2}\}^d$ such that
\begin{equation}
    \E[L(\theta^{priv};\D)-L(\theta^{\star};\D)]=\Omega(\min(1,\frac{\sqrt{d\log(1/\delta)} }{n\epsilon }) GC),
\end{equation}
where $\theta^{\star}$ is a minimizer of $L(\theta;\D)$, $\ell$ is G-Lipschitz, and $C$ is the diameter of the domain $\K$. Both $G$ and $C$ are defined w.r.t. $\ell_p$ geometry.
\end{theorem}
\fi

\begin{restatable}{theorem}{appfour}
\label{thm3}
Let $n,d$ be large enough and $1\geq \epsilon>0,2^{-O(n)}<\delta<o(1/n)$ and $p\ge 1$. 
There exists a convex set $\K\subset\R^d$, such that for every $(\epsilon,\delta)$-differentially private algorithm with output $\theta^{priv}\in \K$,  there is a data-set $\D=\{z_1,...,z_n\}\subset \{0,1\}^d \cup \{\frac{1}{2}\}^d$ such that
\begin{equation}
    \E[L(\theta^{priv};\D)-L(\theta^{\star};\D)]=\Omega(\min(1,\frac{\sqrt{d\log(1/\delta)} }{n\epsilon }) GC),
\end{equation}
where $\theta^{\star}$ is a minimizer of $L(\theta;\D)$, $\ell$ is G-Lipschitz, and $C$ is the diameter of the domain $\K$. Both $G$ and $C$ are defined w.r.t. $\ell_p$ geometry.
\end{restatable}

For the unconstrained case, we notice that the optimal $\theta^*$ under our construction must lie in the unit $\ell_{\infty}$-ball $\K=\{x\in \R^d| 0\le x_i\le 1, \forall i\in [d]\}$, by observing that projecting any point to $\K$ does not increase the $\ell_1$ loss. 
Therefore, our result can be generalized to the unconstrained case directly.
% Therefore the diameter $C=d^{\frac{1}{p}}$.
In a word, our result presents lower bounds $\Omega(\frac{\sqrt{d\log(1/\delta)}}{\epsilon n})$ for all $p\ge 1$ and for both constrained case and unconstrained case. 
%Remarkably, our bound is the best for $p$ near 1 and $p>2$ to our knowledge. 

\bibliographystyle{plainnat}
\bibliography{example_paper}

\newpage
\appendix
\section{Conclusion and limitation}\label{s5}

In this work, we study dimension-free risk bounds in DP-ERM, offering insights from both an algorithmic advancement perspective and an exploration of fundamental limits. In our first result, we show that under the common unconstrained domain and low-rank gradients assumptions, the regularized exponential mechanism is capable of achieving rank-dependent risk bounds for convex objectives, where the loss can be non-smooth and only zeroth order oracles are given. 

Our second result examines the difference between constrained and unconstrained domain assumptions.
Specifically, we show that without the low-rank gradient assumptions, we achieve the same lower bounds for both the constrained and unconstrained domains. 
% the role of the unconstrained assumption in dimension-free DP-ERM. We establish a lower bound indicating that the unconstrained assumption, on its own, is insufficient for achieving dimension-free risk bounds.
In addition, our lower bound is applicable to general $\ell_p$ geometry and has a tighter rate than previous results.

Despite these advancements, several compelling questions remain open in the field. 
First, it is interesting to see if our utility Lemma (Lemma~\ref{lm:sampling_utility}) can be improved, and hence we can tolerate larger $G_k$ for the dimension-independent risk bound.
Second, the current upper bound for $\ell_p$ norms as presented in previous works such as \cite{bgn21,GLLST23} simply adapts the algorithm for $\ell_2$ norms using Hölder's inequality to translate the diameter and Lipschitz constant, leading to a gap between the upper and lower bounds.
Third, our results rely heavily on the convexity assumption on the loss functions, and extending the results to non-convex settings can be meaningful.
Closing the gap is an intriguing open problem. 
Additionally, developing more efficient methods for implementing the exponential mechanism and checking its practical performance are potential avenues for future research.
\section{Preliminary}\label{s2}
% This section provides the background knowledge needed for the remainder of the paper. Additional background material, such as the definition of GLM, is available in the appendix. 
We begin with basic definitions.

\begin{definition}[Differential privacy]
A randomized mechanism $\M$ is $(\epsilon,\delta)$-differentially private\footnote{When $\delta>0$, we may refer to it as approximate-DP, and we name the particular case when $\delta=0$ pure-DP sometimes.} if for any event ${\cal O}\in \mathrm{Range}(\M)$ and for any neighboring databases $\D$ and $\D'$ that differ by a single data element, one has
\begin{align*}
    \Pr[\M(\D)\in {\cal O}]\leq \exp(\epsilon)\Pr[\M(\D')\in {\cal O}] +\delta.
\end{align*}
\end{definition}

\begin{definition}[$G$-Lipschitz Continuity]
A function $f:\K\rightarrow \R$ is $G$-Lipschitz continuous with respect to $\ell_p$ geometry if for all $\theta,\theta'\in \K$, one has:
\begin{equation}
    |f(\theta)-f(\theta')|\leq G\|\theta-\theta'\|_p.
\end{equation}
\end{definition}

The following is the classic Pythagorean Theorem.

\begin{lemma}[Pythagorean Theorem for convex set]
\label{lm:Pythagorean}
Letting $\K \subset \R^d$ be a convex set, $y\in \R^d$ and $x=\Pi_{\K} (y)$, then for any $z\in \K$ we have:
\begin{equation}
    \|x-z\|_2\le \|y-z\|_2.
\end{equation}
\end{lemma}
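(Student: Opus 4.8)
The plan is to reduce the inequality to the standard variational (obtuse-angle) characterization of the Euclidean projection onto a convex set. First I would observe that since $\K$ is convex and, in all our applications, closed and bounded, the projection $x=\Pi_{\K}(y)=\arg\min_{w\in\K}\|w-y\|_2$ exists and is unique, so the statement is well-posed. The whole argument then rests on proving the key inequality $\langle y-x,\,z-x\rangle\le 0$ for every $z\in\K$.

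To establish this, fix $z\in\K$ and consider the segment $w_t:=(1-t)x+tz$, which lies in $\K$ for all $t\in[0,1]$ by convexity. The scalar function $\phi(t):=\|y-w_t\|_2^2$ is a quadratic in $t$ that attains its minimum over $[0,1]$ at $t=0$, since $x$ is by definition the point of $\K$ closest to $y$; hence the one-sided derivative satisfies $\phi'(0)\ge 0$. Expanding $\phi(t)=\|y-x\|_2^2-2t\langle y-x,\,z-x\rangle+t^2\|z-x\|_2^2$ gives $\phi'(0)=-2\langle y-x,\,z-x\rangle\ge 0$, which is exactly the claimed obtuse-angle inequality.

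Finally I would expand $\|y-z\|_2^2$ around $x$: writing $y-z=(y-x)+(x-z)$,
\[
\|y-z\|_2^2=\|y-x\|_2^2+2\langle y-x,\,x-z\rangle+\|x-z\|_2^2\ \ge\ \|x-z\|_2^2,
\]
because $\langle y-x,\,x-z\rangle=-\langle y-x,\,z-x\rangle\ge 0$ by the previous step and $\|y-x\|_2^2\ge 0$. Taking square roots yields $\|x-z\|_2\le\|y-z\|_2$. Since this is a classical fact, there is no serious obstacle; the only points requiring care are the existence and uniqueness of the projection (which needs $\K$ closed, satisfied in our setting) and correctly deriving the first-order condition from the one-sided derivative at the boundary value $t=0$ rather than from an interior stationarity condition.
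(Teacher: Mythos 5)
Your proof is correct and is the standard argument via the variational (obtuse-angle) characterization of Euclidean projection. The paper itself states this lemma without proof, treating it as a classical fact, so there is no in-paper argument to compare against; your write-up supplies exactly the expected textbook derivation, with the right care taken about the one-sided derivative at $t=0$.
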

\section{Additional background knowledge}
\subsection{Generalized Linear Model (GLM)}
The generalized linear model (GLM) is a flexible generalization of ordinary linear regression that allows for response variables with error distribution models other than a normal distribution. To be specific,
\begin{definition}[Generalized linear model (GLM)]
\label{def:GLM}
The generalized linear model (GLM) is a special class of ERM problems where the loss function $\ell(\theta,d)$ takes the following inner-product form:
\begin{equation}
    \ell(\theta;z)=\ell(\langle \theta,x \rangle;y)
\end{equation}
for $z=(x,y)$. Here, $x\in \R^d$ is usually called the feature vector and $y\in \R$ is called the response.
\end{definition}

We also outline some basic properties of differential privacy, which will be used in our lower bounds (see \cite{dwork2014algorithmic} for proof details). 
\begin{proposition}[Group privacy]
\label{prop:group_privacy}
If $\M: X^n\to Y$ is $(\epsilon,\delta)$-differentially private mechanism, then for all pairs of datasets $x,x'\in X^n$, then $\M(x), \M(x')$ are $(k\epsilon, k \delta e^{k\epsilon})$-indistinguishable when $x,x'$ differs on at most $k$ locations.
\end{proposition}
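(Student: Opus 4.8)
The plan is to prove this by a standard hybrid (telescoping) argument, inducting on the Hamming distance $k$ between the two datasets, chaining the one-step $(\epsilon,\delta)$-DP guarantee along an interpolating sequence.

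First I would fix datasets $x, x' \in X^n$ that differ in exactly $k$ coordinates, say on an index set $I = \{i_1,\dots,i_k\} \subseteq [n]$, and build an interpolating chain $x = x^{(0)}, x^{(1)}, \dots, x^{(k)} = x'$, where $x^{(j)}$ agrees with $x'$ on $\{i_1,\dots,i_j\}$ and with $x$ on all remaining coordinates. By construction, each consecutive pair $x^{(j-1)}, x^{(j)}$ differs in the single coordinate $i_j$, hence is a pair of neighboring databases, so the defining inequality of $(\epsilon,\delta)$-DP for $\M$ applies to every link of the chain.

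Next I would prove, by induction on $k$, the slightly sharper claim that for every measurable event $O$,
\begin{equation*}
\Pr[\M(x) \in O] \le e^{k\epsilon}\,\Pr[\M(x') \in O] + \Big(\sum_{j=0}^{k-1} e^{j\epsilon}\Big)\delta.
\end{equation*}
The base case $k=1$ is exactly the definition of $(\epsilon,\delta)$-differential privacy applied to the neighboring pair $(x,x')$. For the inductive step I would apply the inductive hypothesis to $(x, x^{(k-1)})$ (which differ in $k-1$ coordinates), then apply the one-step DP inequality to the neighboring pair $(x^{(k-1)}, x')$, multiply that inequality through by $e^{(k-1)\epsilon}$, and add the two; the multiplicative factors compose to $e^{k\epsilon}$ and the additive terms telescope to $\big(\sum_{j=0}^{k-1} e^{j\epsilon}\big)\delta$, completing the induction.

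Finally I would loosen the constant: since $e^{j\epsilon} \le e^{(k-1)\epsilon} \le e^{k\epsilon}$ for each $0 \le j \le k-1$, we get $\sum_{j=0}^{k-1} e^{j\epsilon} \le k\,e^{k\epsilon}$, yielding $\Pr[\M(x)\in O] \le e^{k\epsilon}\Pr[\M(x')\in O] + k\delta e^{k\epsilon}$. Because the interpolating chain is symmetric under swapping $x$ and $x'$, the same argument gives the reverse inequality, and the two together are exactly the assertion that $\M(x)$ and $\M(x')$ are $(k\epsilon,\, k\delta e^{k\epsilon})$-indistinguishable. There is no genuine obstacle here — the argument is routine — so the only points requiring care are the bookkeeping of the additive $\delta$-term through the telescoping sum and remembering that indistinguishability requires the inequality in both directions.
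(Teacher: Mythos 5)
The paper states this proposition without proof (it explicitly defers to \cite{dwork2014algorithmic}), and your hybrid/telescoping induction is exactly the standard argument: the chain of neighboring datasets, the recursion $\Pr[\M(x)\in O]\le e^{k\epsilon}\Pr[\M(x')\in O]+\big(\sum_{j=0}^{k-1}e^{j\epsilon}\big)\delta$, the relaxation $\sum_{j=0}^{k-1}e^{j\epsilon}\le ke^{k\epsilon}$, and the symmetry remark for two-sided indistinguishability are all correct. It is worth noting that your sharper intermediate constant $\sum_{j=0}^{k-1}e^{j\epsilon}\delta=\frac{e^{k\epsilon}-1}{e^{\epsilon}-1}\delta$ is precisely the form $\frac{e^{k}-1}{e-1}\delta$ that the paper actually invokes when applying group privacy in the proof of Theorem~\ref{thm2}, so your bookkeeping matches how the proposition is used downstream.
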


\begin{proposition}[Post processing]
If $\M: X^n\to Y$ is $(\epsilon,\delta)$-differentially private and $\A: Y\to Z$ is any randomized function, then $\A\circ \M: X^n\to Z$ is also $(\epsilon,\delta)$-differentially private.
\end{proposition}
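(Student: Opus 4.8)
The plan is to prove the claim first for deterministic post-processing maps and then lift it to an arbitrary randomized $\A$ by averaging over the internal coins of $\A$; this is the standard argument and the probabilistic content is very short. Throughout, fix two neighboring databases $\D, \D' \in X^n$ and an arbitrary (measurable) event $O \subseteq Z$. By the definition of differential privacy it suffices to establish $\Pr[(\A\circ\M)(\D)\in O] \le e^{\epsilon}\Pr[(\A\circ\M)(\D')\in O] + \delta$.

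First I would handle the case where $\A$ is deterministic. Then $T := \A^{-1}(O) = \{y\in Y : \A(y)\in O\}$ is a well-defined event in $\mathrm{Range}(\M)$, and the event $\{(\A\circ\M)(\D)\in O\}$ coincides with $\{\M(\D)\in T\}$. Applying the $(\epsilon,\delta)$-DP guarantee of $\M$ to the single event $T$ yields
\begin{align*}
    \Pr[(\A\circ\M)(\D)\in O] = \Pr[\M(\D)\in T] \le e^{\epsilon}\Pr[\M(\D')\in T] + \delta = e^{\epsilon}\Pr[(\A\circ\M)(\D')\in O] + \delta,
\end{align*}
which is exactly the desired inequality in the deterministic case.

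Next I would reduce the general randomized case to the deterministic one. Write $\A$ as a function of its input $y$ together with an independent random seed $r\sim\mu$, so that for each fixed value $r$ the map $y\mapsto \A(y;r)=:\A_r(y)$ is deterministic and $r$ is drawn independently of $\M$. Conditioning on $r$ and applying the deterministic case to each $\A_r$ gives
\begin{align*}
    \Pr[(\A\circ\M)(\D)\in O] = \E_{r\sim\mu}\big[\Pr[\A_r(\M(\D))\in O]\big] \le \E_{r\sim\mu}\big[e^{\epsilon}\Pr[\A_r(\M(\D'))\in O] + \delta\big] = e^{\epsilon}\Pr[(\A\circ\M)(\D')\in O] + \delta,
\end{align*}
by linearity of expectation. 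Since $\D, \D'$ and $O$ were arbitrary, $\A\circ\M$ is $(\epsilon,\delta)$-differentially private.

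The only real obstacle here is measurability bookkeeping rather than anything probabilistic: one must check that $O$ may be taken to range over a generating family of events on $Z$, that $\A^{-1}(O)$ lies in the $\sigma$-algebra on $Y$ with respect to which $\M$'s guarantee is stated, and that every randomized $\A$ admits the ``seed plus deterministic map'' representation with $r$ independent of $\M$. These hold in the discrete and standard-Borel settings in which differential privacy is normally formulated, and I would simply invoke them; all the actual content of the proof is in the two displayed inequalities above.
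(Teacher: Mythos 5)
Your proof is correct; it is the standard pull-back-the-event argument for deterministic post-processing followed by averaging over the seed of a randomized $\A$, which is exactly the proof in the reference the paper cites for this fact. The paper itself states this proposition without proof (deferring to Dwork and Roth), so there is nothing to compare against beyond noting that your argument, including the measurability caveats you flag, is the canonical one.
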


% \begin{proposition}[Composition]
% Let $\M_i$ be an $(\epsilon_i,\delta_i)$-differentially private mechanism for all $i\in [k]$. If $\M_{[k]}$ is defined to be
% \begin{equation}
%     \M_{[k]}(x)=(\M_1(x),...,\M_k(x))
% \end{equation}
% then $\M_{[k]}$ is $(\sum_{i=1}^k \epsilon_i,\sum_{i=1}^k \delta_i)$-differentially private.
% \end{proposition}

\subsection{Construction of fingerprinting codes}
To address the digital watermarking problem, Fingerprinting codes were introduced by \cite{bs98}. Imagine a company selling software to users. A fingerprinting code is a pair of randomized algorithms $(\Gen, \Trace)$, where $\Gen$ generates a length $d$ code for each user $i$. To prevent any malicious coalition of users copy and distributing the software, the $\Trace$ algorithm can trace one of the malicious users, given a code produced by the coalition of users. They may only can the bits with a divergence in the code: any bit in common is potentially vital to the software and risky to change.  

In this section, we introduce the fingerprinting code used by \cite{buv18}, which is based on the first optimal fingerprinting code \cite{tar08} with additional error robustness. The mechanism of the fingerprinting code is described in Algorithm \ref{alg1} for completeness.

\begin{algorithm}[tb]
\caption{The Fingerprinting Code $(\Gen,\Trace)$} 
\begin{algorithmic}
\STATE \textbf{Sub-procedure $\Gen'$}:
\STATE Let $d=100n^2 \log(n/\xi)$ be the length of the code.
\STATE Let $t=1/300n$ be a parameter and let $t'$ be such that $sin^2t'=t$.
\FOR{$j=1,...,d$:}
\STATE Choose random $r$ uniformly from $[t',\pi/2-t']$ and let $p_j=sin^2r_j$. Note that $p_j\in [t,1-t]$.
\STATE For each $i=1,...,n$, set $C_{ij}=1$ with probability $p_j$ independently.
\ENDFOR
\STATE {\bf Return:} $C$
\STATE

\STATE \textbf{Sub-procedure $\Trace'(C,c')$}:
\STATE Let $Z=20n \log(n/\xi)$ be a parameter.
\STATE For each $j=1,...,d$, let $q_j=\sqrt{(1-p_j)/p_j}$.
\STATE For each $j=1,...,d$, and each $i=1,...,n$, let $U_{ij}=q_j$ if $C_{ij}=1$ and $U_{ij}=-1/q_j$ else wise.
\FOR{each $i=1,...,n$:}
\STATE Let $S_i(c')=\sum_{j=1}^d c_j' U_{ij}$
\STATE Output $i$ if $S_i(c')\ge Z/2$.
\STATE Output $\perp$ if $S_i(c')< Z/2$ for every $i=1,...,n$.
\ENDFOR
\STATE

\STATE \textbf{Main-procedure $\Gen$}:
\STATE Let $C$ be the (random) output of $\Gen'$, $C\in \{0,1\}^{n\times d}$
\STATE Append $2d$ 0-marked columns and $2d$ 1-marked columns to $C$.
\STATE Apply a random permutation $\pi$ to the columns of the augmented codebook.
\STATE Let the new codebook be $C'\in \{0,1\} ^{n\times 5d}$.
\STATE {\bf Return:} $C'$.
\STATE

\STATE \textbf{Main-procedure $\Trace(C,c')$}:
\STATE Obtain $C'$ from the shared state with $\Gen$.
\STATE Obtain $C$ by applying $\pi^{-1}$ to the columns of $C'$ and removing the dummy columns.
\STATE Obtain $c$ by applying $\pi^{-1}$ to $c'$ and removing the symbols corresponding to fake columns.
\STATE {\bf Return:} $i$ randomly from $\Trace'(C,c)$.
\end{algorithmic} 
\label{alg1}
\end{algorithm}

The sub-procedure part is the original fingerprinting code in \cite{tar08}, with a pair of randomized algorithms $(\Gen,\Trace)$. The code generator $\Gen$ outputs a codebook $C\in \{0,1\}^{n\times d}$. The $ith$ row of $C$ is the codeword of user $i$. The parameter $d$ is called the length of the fingerprinting code.

We make the formal definition of fingerprinting codes:
\begin{definition}[fingerprinting codes]
Given $n,d\in\N,\xi \in(0,1]$, a pair of (random) algorithms $(\Gen,\Trace)$ is called an $(n,d)$-fingerprinting code with security $\xi\in(0,1]$ if $\Gen$ outputs a code-book $C\in\{0,1\}^{n\times d}$ and for any (possibly randomized) adversary $\A_{FP}$ and any subset $S\subseteq[n]$, if we set $c\leftarrow_R\A_{FP}(C_S)$, then
\begin{itemize}
    \item $\Pr[c\in F(C_S)\bigwedge \Trace(C,c)=\perp]\leq \xi$ 
    \item $\operatorname{Pr}\left[\Trace \left(C, c\right) \in[n] \backslash S\right] \leq \xi$
\end{itemize}
where $F\left(C_{S}\right)=\left\{c\in\{0,1\}^{d} \mid \forall j \in[d], \exists i \in S, c_{j}=c_{i j}\right\}$, and the probability is taken over the coins of $\Gen,\Trace$ and $\A_{FP}$.
\end{definition}

Fingerprint codes imply the hardness of privately estimating the mean of a dataset over $\{0,1\}^d$. Otherwise, the coalition of users can simply use the rounded mean of their codes to produce the copy. Then the DP-ERM problem can be reduced to privately estimating the mean by using the linear loss whose minimizer is precisely the mean.

The security property of fingerprinting codes asserts that any codeword can be “traced” to a user $i$. Moreover, we require that the fingerprinting code can find one of the malicious users even when they get together and combine their codewords in any way that respects the marking condition. That is, a tracing algorithm $\Trace$ takes as inputs the codebook $C$ and the combined codeword $c'$ and outputs one of the malicious users with high probability.

The sub-procedure $\Gen'$ first uses a $\sin^2 x$ like distribution to generate a parameter $p_j$ (the mean) for each column $j$ independently, then generates $C$ randomly by setting each element to be 1 with probability $p_j$ according to its location. The sub-procedure $\Trace'$ computes a threshold value $Z$ and a 'score function' $S_i(c')$ for each user $i$, then reports $i$ when its score is higher than the threshold. 

The main procedure was introduced in \cite{buv18}, where $\Gen$ adds dummy columns to the original fingerprinting code and applies a random permutation. $\Trace$ can first 'undo' the permutation and remove the dummy columns, then use $\Trace'$ as a black box. This procedure makes the fingerprinting code more robust in tolerating a small fraction of errors to the marking condition. 

In particular, they prove the fingerprinting code Algorithm \ref{alg1} has the following property.

\begin{theorem}[Theorem 3.4 in \cite{buv18}]
\label{thm:construction_fc}
For every $d$, and $\gamma \in (0, 1]$, there exists a  $(n,d)$-fingerprinting code with security $\gamma$ robust to a 1/75 fraction of errors for, for 
$$
n=\Omega(\sqrt{d/\log(1/\gamma)})
$$
\end{theorem}

\section{Example for Pure-DP}
In the construction of lower bounds for constrained DP-ERM in \cite{bst14}, they chose the linear function $\ell(\theta;z)=\langle \theta,z\rangle$ as the objective function, which is not applicable in the unconstrained setting because it could decrease to negative infinity. Instead, we extend the linear loss in unit $\ell_2$ ball to the whole $\R^d$ while preserving its Lipschitzness and convexity.
We use such an extension to define our loss function in the unconstrained case. 
Namely, we define
\begin{equation}
\label{eq:func_for_pureDP}
    \ell(\theta;z)=\min_{\|y\|_2\le 1} -\langle y,z \rangle+\|\theta-y\|_2
\end{equation}
for all $\theta,z$ in the unit $\ell_2$ ball,
which is convex, 1-Lipschitz and equal to $-\langle \theta,z \rangle$ when $\|\theta\|_2\le 1$ according to Lemma \ref{lem4}. 
Specifically, it's easy to verify that $\ell(\theta;0)=\max\{0, \|\theta\|_2-1\}$. When $\|z\|_2=1$, one has
\begin{equation}
    \ell(\theta;z)\ge \min_{\|y\|_2\le 1} -\langle y,z \rangle\ge -1,
\end{equation}
where the equation holds if and only if $\theta=z$.

For any dataset $\D=\{z_1,...,z_n\}$, we define $ L(\theta;\D)=\frac{1}{n}\sum_{i=1}^n \ell(\theta;z_i).$ 
We need the following lemma from \cite{bst14} to prove the lower bound.
The proof is similar to that of Lemma 5.1 in \cite{bst14}, except that we change the construction by adding points $\mathbf{0}$ (the all-zero $d$ dimensional vector) as our dummy points. For completeness, we include it here.

\begin{restatable}[Part-One of Lemma 5.1 in \cite{bst14} with slight modifications]{lemma}{reductionone}
\label{lem5}
Let $n,d\ge 2$ and $\epsilon>0$. There is a number $n^*=\Omega(\min(n,\frac{d}{\epsilon}))$ such that for any $\epsilon$-differentially private algorithm $\A$, there is a dataset $\D=\{z_1,...,z_n\}\subset \{\frac{1}{\sqrt{d}},-\frac{1}{\sqrt{d}}\}^d\cup \{ \mathbf{0}\}$ with $\|\sum_{i=1}^n z_i\|_2=n^*$ such that, with probability at least $1/2$ (taken over the algorithm random coins), we have 
\begin{equation}
    \|\A(\D)-q(\D)\|_2=\Omega(\min(1,\frac{d}{n\epsilon})),
\end{equation}
where $q(\D)=\frac{1}{n}\sum_{i=1}^n z_i$.
\end{restatable}

Lemma \ref{lem5} basically says that for any $\epsilon$-DP algorithm, it's impossible to for it to estimate the average of some dataset $z_1,...,z_n$ with accuracy $o(\min(1,\frac{d}{n\epsilon}))$. 
% For the pure-DP case, our proof idea is to argue that failing to estimate the average of $z_1,...,z_n$ will lead to a larger loss error.
Using the loss functions defined in Equation~(\ref{eq:func_for_pureDP}), Lemma~\ref{lem5} and our reduction theorem~\ref{thm4}, we have the following theorem,
whose proof can be found in the appendix.

\begin{restatable}[Lower bound for $\epsilon$-differentially private algorithms]{theorem}{reductiontwo}
\label{thm1}
Let $n,d$ be large enough and $\epsilon>0$. For every $\epsilon$-differentially private algorithm with output $\theta^{priv}\in \R^d$, there is a dataset $\D=\{z_1,...,z_n\}\subset \{\frac{1}{\sqrt{d}},-\frac{1}{\sqrt{d}}\}^d \cup \{ \bzero\}$ such that, with probability at least $1/2$ (over the algorithm random coins), we must have that
\begin{equation}
    L(\theta^{priv};\D)-\min_{\theta\in\R^d}L(\theta;\D)=\Omega(\min(1,\frac{d}{n\epsilon})).
\end{equation}
\end{restatable}

As mentioned before, this lower bound suggests the necessity of additional assumptions for dimension-independent results in pure DP.
\section{Omitted proof for Section \ref{sec sam}}
\label{appendix:omitted_proof_upper_bound}
The norm $\|\cdot\|$ means the $\ell_2$ norm for simplicity in this section.

\errbyvar*
\begin{proof}
The proof involves studying the following quantity
\begin{align*}
    V_t:=\E_{x\sim\pi_t}f(x).
\end{align*}
For simplicity, we let $\phi(x):=\frac{\mu}{2}\|x\|^2$ to be the regularized term and have
\begin{align*}
    \frac{\d}{\d t}V_t=& ~ \frac{\d}{\d t}\frac{\int f(x)e^{-tf(x)-\phi(x)}\d x}{\int e^{-tf(x)-\phi(x)}\d x}\\
    =&~ \frac{-\int f^2(x)e^{-tf(x)-\phi(x)}\d x}{\int e^{-tf(x)-\phi(x)}\d x}+\left(\frac{\int f(x)e^{-tf(x)-\phi(x)}\d x}{\int e^{-tf(x)-\phi(x)}\d x}\right)^2\\
    =& ~ (\E_{x\sim\pi_t}f(x))^2-\E_{x\sim \pi_t}f^2(x) = -\Var_{x\sim \pi_t}(f).
\end{align*}
Hence we have
$\E_{x\sim\pi}f(x)= V_1
    = V_{\infty}-\int_{1}^{\infty}\frac{\d}{\d t}V_t\d t
    = f(x^*)+\int_{1}^{\infty}\Var_{x\sim\pi_t}(f)\d t$.
\end{proof}

\sampleone*

\begin{proof}
Note that
\begin{align*}
    \Var_{x\sim\pi}f(x)=&~ \Var_{x\sim\pi}(f(x)+\frac{\mu}{2}\|x\|^2-\frac{\mu}{2}\|x\|^2)\\
    =& ~ \Var_{x\sim\pi}(f(x)+\frac{\mu}{2}\|x\|^2)+\Var_{x\sim\pi}(\frac{\mu}{2}\|x\|^2)-2\Cov_{x\sim\pi}\big((f(x)+\frac{\mu}{2}\|x\|^2)(\frac{\mu}{2}\|x\|^2)\big)\\
    \leq & ~ \Var_{x\sim\pi}(f(x)+\frac{\mu}{2}\|x\|^2)+\Var_{x\sim\pi}(\frac{\mu}{2}\|x\|^2)+2\sqrt{\Var_{x\sim\pi}(f(x)+\frac{\mu}{2}\|x\|^2)\cdot\Var_{x\sim\pi}(\frac{\mu}{2}\|x\|^2)}\\
    \leq & ~ 2\Var_{x\sim\pi}(f(x)+\frac{\mu}{2}\|x\|^2)+2\Var_{x\sim\pi}(\frac{\mu}{2}\|x\|^2)\\
    \leq & ~ 2d+\frac{\mu^2}{2}\Var_{x\sim\pi}(\|x\|^2),
\end{align*}
where the last line follows from Lemma~\ref{lm:var_by_dim}.
It suffices to consider $\Var_{x\sim\pi}(\|x\|^2)$, for which we have
\begin{align*}
    \Var_{x\sim\pi}(\|x\|^2)=& ~ \E_{x\sim\pi}\left(\|x\|^2-\E_{x\sim\pi}\|x\|^2\right)^2\\
    =& ~ \E_{x\sim\pi}(\|x\|^2-\E_{x\sim\pi}\|x-\ox\|^2-\|\ox\|^2)^2\\
    \leq & ~ 2\E_{x\sim\pi}(\|x-\ox\|^2-\E_{x\sim\pi}\|x-\ox\|^2)^2+8\E_{x\sim\pi}(\ox^\top(x-\ox))^2\\
    =& ~ 2\Var_{x\sim\pi}\|x-\ox\|^2+8\E_{x\sim\pi}(\ox^\top(x-\ox))^2.
\end{align*}

% \zl{an extra right bracket in the first line?}
% \zl{what's the reasoning for the third line? I only saw $(a+b)^2\le 2a^2+b^2$. Should it be $+8\E_{x\sim\pi}(\ox^\top(x-\ox))^2$?}

Since $\pi$ is $\mu$-strongly log-concave,
we have
\begin{align*}
    \Var_{x\sim\pi}\|x-\ox\|^2\leq \frac{1}{\mu}\E_{x\sim\pi}\|2(x-\ox)\|^2\leq \frac{4}{\mu}\tr
    \Cov(\pi),
\end{align*}
where the first inequality follows from Brascamp–Lieb inequality.
As $\pi$ is $\mu$-strongly log-concave, we know $\Cov(\pi)\preceq \frac{1}{\mu}\cdot I$ and hence we know $\Var_{x\sim\pi}\|x-\ox\|^2\leq 4d/\mu^2$. 
Similarly, we have
\begin{align*}
    \E_{x\sim\pi}(\ox^\top(x-\ox))^2=\ox^\top\Cov(\pi)\ox\leq \frac{1}{\mu}\cdot\|\ox\|^2.
\end{align*}

Combining these, we have
\begin{align*}
    \Var_{x\sim \pi}f(x)\leq 2d+\frac{\mu^2}{2}(4d/\mu^2+\|\ox\|^2/\mu).
\end{align*}
\end{proof}

\sampletwo*
\begin{proof}
For simplicity, denote $Qx$ by $x_1$.
Similar to $x_1$, we may write $x^*_1$ for $Qx^*$.
For simplicity, we denote $h(x)=f(x)+\frac{\mu}{2}\|x\|^2$.
We prove this lemma by considering the Langevin diffusion associated with $\pi$, that is 
\begin{align*}
    \d Y_t=-\nabla h(Y_t)\d t + \sqrt{2}\d B_t,
\end{align*}
where $B_t$ is a $d$-dimensional Brownian motion and we denote its associated semi-group $(P_t)_{t\ge 0}$.

Consider the function $g(x):=\|Q(x-x^*)\|_2^2=\|x_1-x^*_1\|_2^2$.
Recall the infinitesimal generator $A$ such that
\begin{align*}
    Ag(x)=-\langle \nabla h(x),\nabla g(x)\rangle+\Delta g(x).
\end{align*}

Recall that $\nabla h(x^*)=0$ as $x^*$ is the global optimum, and by the strong convexity of $h$, we have
\begin{align*}
    Ag(x)=&~ -2\langle\nabla h(x)-\nabla h(x^*),Q(x-x^*)\rangle+2k\\
    \leq & -2\mu \|Q(x-x^*)\|_2^2+2k\\
    =& ~ -2\mu g(x)+2k.
\end{align*}

For any $t\geq 0$ and $x\in\R^d$, we let $v(t,x)=P_tg(x)$. 
We have $\frac{\partial v(t,x)}{\partial t}=P_tAg(x)$ and hence
\begin{align*}
    \frac{\partial v(t,x)}{\partial t}= P_tAg(x)\leq -2\mu P_tg(x)+2k=& -2\mu v(t,x)+2k.
\end{align*}

By Grönwall's inequality, we know for all $t\ge0$ and $x\in\R^d$, one has
\begin{align*}
    \E[\|Q(Y_t-x^*)\|_2^2]\leq \|Q(x-x^*)\|_2^2e^{-2\mu t}+\frac{k}{\mu}(1-e^{-2\mu t}).
\end{align*}

Then for any $c>0$ and $t>0$, we know
\begin{align*}
    \E_{x\sim\pi}(g\wedge c):=&~\pi(g\wedge c)=\pi P_t(g\wedge c)\leq \pi(P_t g\wedge c)\\
    =&~\int \pi(\d x)c\wedge \{\|Q(x-y)\|^2e^{-2\mu t}+\frac{k}{\mu}(1-e^{-2\mu t}) \}\\
    \leq &~ \pi(c\wedge e^{-2\mu t}g)+(1-e^{-2\mu t})k/\mu\\
    =& ~ \E_{x\sim \pi }(c\wedge ge^{-2\mu t})+(1-e^{-2\mu t})k/\mu.
\end{align*}
Hence we know $\E_{x\sim \pi}(g)\leq k/\mu$.
\end{proof}

\samplethree*
\begin{proof}
For simplicity, let $x_1=Qx$ and $x_2=(I-Q)x$. Without loss of generality, assume $x_1$ is the first $k$ coordinates of $x$, and hence $\|\frac{\partial f}{\partial x_2}\|\le G_k$.
We say $x_2\sim\pi$ if its density is proportional to $\frac{\int e^{-f(x_1,x_2)-\frac{\mu}{2}(\|x_1\|^2+\|x_2\|^2)}\d x_1}{\int \int e^{-f(x_1,x_2)-\frac{\mu}{2}(\|x_1\|^2+\|x_2\|^2)}\d x_1\d x_2}$, and as for the distribution of $x_2$ conditional on $x_1$, we denote it by $x_2\mid x_1 \sim\pi$, whose density is $\frac{ e^{-f(x_1,x_2)-\frac{\mu}{2}\|x_2\|^2}}{\int e^{-f(x_1,x_2)-\frac{\mu}{2}\|x_2\|^2}\d x_2}$.
And the meanings for $x_1\sim\pi$ and $x_1\mid x_2\sim\pi$ follow similarly.

By the variance decomposition, we have
\begin{align}
\label{eq:var_decomp}
    \Var_{(x_1,x_2)\sim\pi}f(x)=\E_{x_2\sim\pi}\Var_{x_1\mid x_2\sim\pi}f(x)+\Var_{x_2\sim\pi}(\E_{x_1\mid x_2\sim\pi}f(x)).
\end{align}
For simplicity, we may hide  ``$\sim\pi$'' in the subscripts.
It suffices to bound the two terms in Equations~\eqref{eq:var_decomp} separately.
For the first term, since we are considering the variance conditional on $x_2$, by Lemma~\ref{lm:regular_var_by_dim} we have
\begin{align*}
    \Var_{x_1\mid x_2}f(x)\leq 4k+\frac{\mu}{2}\|\E_{x_1\mid x_2}x_1\|^2.
\end{align*}
Hence we have
\begin{align*}
    \E_{x_2\sim\pi}\Var_{x_1\mid x_2\sim\pi}f(x)\leq &~ 4k+\frac{\mu}{2}\E_{x_2}\|\E_{x_1\mid x_2}x_1\|^2\\
    \leq & ~ 4k+\frac{\mu}{2}\E_{x}\|x_1\|^2,
\end{align*}
where the last line follows from Law of total expectation and Jensen's Inequality.
Again, since $\pi$ is $\mu$-strongly log-concave, we have $\Cov(\pi)\preceq \frac{1}{\mu}\cdot I$
and hence $\E\|x_{1}-\E x_{1}\|^{2}\leq k/\mu $. Therefore, we have
\[
\E_{x_{2}}\Var_{x_{1}|x_{2}}f\leq\frac{9}{2}k+\frac{\mu}{2}\cdot\|\E x_{1}\|^{2}.
\]
Let $y=\arg\min_{x}f(x)+\frac{\mu}{2}\|x\|^{2}$.
% \textcolor{red}{
%  Theorem 5.14 in \cite{LV07} 
% shows that the set $\{y+(a,b):\|a\|_{2}\leq\frac{1}{10}\}$
% has measure in $\pi$ at least $2^{-\Theta(k)}$. But, by Gaussian concentration, we know the probability 
% \[
% \|x_{1}-\E x_{1}\|_{2}\geq\E\|x_{1}\|_{2}+10\sqrt{k}
% \]
% is at most $2^{-\Theta(k)}$. 
% (There should be a direct citation instead). 
% }
% % \Daogao{I tried to find the direct reference, but failed. Just leave it here first.}
% \Daogao{The first citation on \cite{LV07} seems inappropriate? Anyway can see Theorem~1 in\cite{DM16}.}
% \\
By Lemma~\ref{lm:bound_first_k_dim}, we can show $\|\E_{x} x_{1}-y_{1}\|_{2}=O(\sqrt{k/\mu})$.
Hence we have
\begin{align*}
    \E_{x_2}\Var_{x_1\mid x_2}f(x)\lesssim k+\mu\cdot\|y_1\|^2.
\end{align*}
Noting that $f(y)+\frac{\mu}{2}\|y\|^2\leq f(x^*)+\frac{\mu}{2}\|x^*\|^2$ and $f(y)\geq f(x^*)$, we have $\|y_1\|^2\leq \|y\|^2\leq \|x^*\|^2$ and hence
\begin{align}
\label{eq:term_one}
    \E_{x_2}\Var_{x_1\mid x_2}f(x)\lesssim k+\mu\cdot\|x^*\|^2.
\end{align}

% Similarly, we can show $\mu\|y_1\|^2\leq f(0)-f(y)\leq G\|x^*\|$.
% Hence 
% \begin{align}
%     \label{eq:term_one}
%     \E_{x_2}\Var_{x_1\mid x_2}f(x)\lesssim k+\mu\cdot\|x^*\|^2.
% \end{align}

Now we bound the second term in Equation~\eqref{eq:var_decomp}.
For simplicity, we use $\phi(x)=\frac{\mu}{2}\|x\|^2$ and denote
\begin{align*}
    g(x_2):=\E_{x_1\mid x_2\sim\pi}f(x).
\end{align*}

We use $\partial_2$ for taking partial derivative with respect to $x_2$, and one has
\begin{align*}
    \partial_2 g(x_2)=& ~\partial_2\frac{\int f(x_1,x_2)\exp(-f(x_1,x_2)-\phi(x_1,x_2))\d x_1}{\int \exp(-f(x_1,x_2)-\phi(x_1,x_2))\d x_1}\\
    =& ~ \frac{\int (\partial_2 f) \exp(-f-\phi)\d x_1}{\int \exp(-f-\phi)\d x_1}  -\frac{\int f\cdot\partial_{2}(f+\phi)\cdot\exp(-f-\phi)dx_{1}}{\int\exp(-f-\phi)dx_{1}}\\
    & ~ +\frac{\int f\cdot\exp(-f-\phi)dx_{1}\cdot\int\partial_{2}(f+\phi)\exp(-f-\phi)dx_{1}}{(\int\exp(-f-\phi)dx_{1})^{2}}\\
    = &~ \E_{x_{1}}\partial_{2}f+(\E_{x_{1}}f)(\E_{x_{1}}\partial_{2}(f+\phi))-(\E_{x_{1}}(f\partial_{2}(f+\phi)))\\
    = &~ \E_{x_{1}}\partial_{2}f-\E_{x_{1}}((f-\E_{x_{1}}f)\partial_{2}(f+\phi)))\\
    = & ~ \E_{x_{1}}\partial_{2}f-\E_{x_{1}}((f-\E_{x_{1}}f)\partial_{2}f),
\end{align*}
where the last equality follows from that $\partial_2\phi=\mu x_2$.
% We use $\partial_2$ for taking partial derivative with respect to $x_2$, note that $\partial_2 f=0$ by the assumption and hence one has
% \begin{align*}
%     \partial_2 g(x_2)=& ~\partial_2\frac{\int f(x_1,x_2)\exp(-f(x_1,x_2)-\phi(x_1,x_2))\d x_1}{\int \exp(-f(x_1,x_2)-\phi(x_1,x_2))\d x_1}\\
%     =& ~ \frac{\int (\partial_2 f) \exp(-f-\phi)\d x_1}{\int \exp(-f-\phi)\d x_1}  -\frac{\int f\cdot\partial_{2}(f+\phi)\cdot\exp(-f-\phi)dx_{1}}{\int\exp(-f-\phi)dx_{1}}\\
%     & ~ +\frac{\int f\cdot\exp(-f-\phi)dx_{1}\cdot\int\partial_{2}(f+\phi)\exp(-f-\phi)dx_{1}}{(\int\exp(-f-\phi)dx_{1})^{2}}\\
%     = &~ (\E_{x_{1}}f)(\E_{x_{1}}\partial_{2}(f+\phi))-(\E_{x_{1}}(f\partial_{2}(f+\phi)))\\
%     = &~ -\E_{x_{1}}\big((f-\E_{x_{1}}f)\partial_{2}\phi)\big).
% \end{align*}

% Note that $\partial_2 \phi= \mu\cdot x_2$ and hence
% \begin{align*}
%     \E_{x_{1}}((f-\E_{x_{1}}f)\partial_{2}\phi))=\E_{x_{1}}((f-\E_{x_{1}}f)\mu\cdot x_{2}))=0,
% \end{align*}
% which implies
% $\partial_{2}g=0,$ 
% and hence $\Var_{x_2\sim\pi}(\E_{x_1\mid x_2\sim\pi}f(x))=0$.

By Brascamp–Lieb inequality, we get 
\begin{align*}
    \Var_{x_2}(\E_{x_1\mid x_2}f)=&~\Var_{x_2}(g)\\
    \lesssim &~\frac{1}{\mu}\cdot\E_{x_2}\|\partial_2g\|^2\\
    \lesssim & ~\frac{1}{\mu}( \E_{x}\|\partial_2 f\|^2+\E_{x_2}\Big(\Var_{x_1\mid x_2}f\cdot \E_{x_1}\|\partial_2 f\|^2\Big))\\
    \lesssim & ~ \frac{1}{\mu}( G_k^2+G_k^2\cdot \E_{x_2}\Var_{x_1\mid x_2}f)\\
    \lesssim & ~\frac{G_k^2}{\mu} (k+\mu\|x^*\|^2),
\end{align*}
where the last line follows from Equation~\eqref{eq:term_one}.
\end{proof}

\samplefour*
\begin{proof}
Let $p_t(x)\propto \exp(-\eta tf(x)-\frac{\eta\mu}{2}\|x\|^2)$. 
By Lemma~\ref{lm:err_by_var}, we know
\begin{align*}
    \E_{x\sim p}\eta f(x)=\min_x\eta f(x)+\int_1^{\infty}\Var_{x\sim p_t}\eta f(x)\d t.
\end{align*}
By Lemma~\ref{lm:var_k}, we have
\begin{align*}
    \Var_{x\sim p_t}\eta f(x)=&~\frac{1}{t^2}\Var_{x\sim p_t}\eta tf(x)\\
    \lesssim & ~ \frac{1}{t^2}(k+\eta\mu\cdot\|x^*\|^2)(\frac{t^2\eta^2G_k^2}{\eta\mu}+1).\\
    % =& ~\frac{k}{t^2}+\min\{\frac{\eta\mu\cdot\|x^*\|^2}{t^2},\frac{\eta G\|x^*\|}{t}\}.
\end{align*}
Hence we get
\begin{align*}
    \E_{x\sim p}\eta f(x)-\min_{x}\eta f(x)\lesssim \eta\mu\|x^*\|^2+\int_1^{\infty}\min_k\{\frac{\eta G_k^2}{\mu}(k+\eta\mu\cdot\|x^*\|^2)+\frac{k}{ t^2}\}\d t.
\end{align*}
and hence
\begin{align*}
     \E_{x\sim p}f(x)-\min_{x}f(x)\lesssim & 
    \mu\|x^*\|^2+ \int_1^{\infty}\min_k\{\frac{G_k^2}{\mu}(k+\eta\mu\cdot\|x^*\|^2)+\frac{k}{\eta t^2}\}\d t.
\end{align*}
\end{proof}

\subsection{Proof of Theorem \ref{thm main app}} 
\paragraph{Privacy Guarantee:}
We first introduce the following lemma on the GDP of exponential mechanism.
\begin{lemma}[GDP of regularized exponential mechanism \cite{gll22}]
\label{thm:GLL22}
    Given convex set $\cK\subseteq \R^d$ and $\mu$-strongly convex functions $F,\Tilde{F}$ over $\cK$. Let $P,Q$ be distributions over $\cK$ such that $P(x)\propto e^{-F(x)}$ and $Q(x)\propto e^{-\Tilde{F}(x)}$.
If $\Tilde{F}-F$ is $G$-Lipschitz over $\cK$, then for all $\epsilon>0$,
\begin{align*}
    \delta\big(P\|Q\big)(\epsilon) 
    \leq \delta\big(\cN( 0,1)\|\cN(\frac{G}{\sqrt{\mu}},1)\big)(\epsilon).
\end{align*}
\end{lemma}
% Privacy curve $\delta:\R_{\ge 0}\to[0,1]$ is a function such that for every $\epsilon\ge 0$, $\M$ is $(\epsilon,\delta(\epsilon))$-DP, i.e., we have 
% $$\delta\big(\M(\D)\|\M(\D')\big)(\epsilon)\le\delta(\epsilon)$$
% for all neighboring datasets $\D,\D'$. 
The privacy cure between two random variables $X$ and $Y$ is defined as:
\begin{align*}
    \delta(X\|Y)(\epsilon):=\sup_{S}\Pr[Y\in S]-e^{\epsilon}\Pr[X\in S].
\end{align*}
One can explicitly calculate the privacy curve of a Gaussian mechanism as 
\begin{align*}
    \delta(\cN(0,1)\|\cN(s,1))(\epsilon)=\Phi(-\frac{\epsilon}{2}+\frac{s}{2})-e^\epsilon\Phi(-\frac{\epsilon}{s}-\frac{s}{2}),
\end{align*}
where $\Phi$ is the Gaussian cumulative distribution function (CDF).
Then the privacy guarantee follows immediately from Lemma~\ref{thm:GLL22} by our parameter settings. 

\paragraph{Utility Guarantee:}
As for the utility guarantee, by Lemma~\ref{lm:sampling_utility}, we have 
\begin{align*}
    \E[L(\theta^{app};\D)-L(\theta^*;\D)]
    \lesssim  &~\mu\|x^*\|^2+\int_1^{d}\min_k\{\frac{G_k^2}{\mu}(k+\eta\mu\cdot\|x^*\|^2)+\frac{k}{\eta t^2}\}\d t +\int_{d}^{\infty}\frac{d}{\eta t^2}\d t\\
    \lesssim&~ \frac{GC\sqrt{k\log(1/\delta)}}{n\epsilon}+\frac{k}{\eta}+\frac{G_k^2}{\mu}(k+\eta \mu \|x^*\|^2)d.
\end{align*}
When $G_k\le \frac{G}{n\epsilon\sqrt{d}}$ as in the precondition, we get the desired utility guarantee.

\paragraph{Oracle Complexity:}
We make use of the following sampler:
\begin{lemma}[\cite{gll22}]
\label{thm:sampler}
Given a convex set $\K\subset\R^d$ of diameter $C$, a $\mu$-strongly convex functions $\psi$ and a family of $G$-Lipschitz convex functions $\{f_i\}_{i\in I}$ defined over $\K$. Define the function $F(x):=\E_{i\in I}f_i(x)+\psi(x)$.
For any $0<\delta<1/2$,
one can generate a random point $x$ whose distribution has $\delta$ total variation distance to the distribution proportional to $\exp(-F)$ in 
\begin{align*}
    T:=\Theta\left(\frac{G^2}{\mu} \log ^2\left(\frac{G^2\left(d / \mu+C^2\right)}{\delta}\right)\right) \text{ steps},
\end{align*}
where each step accesses only $O(1)$ values of $f_i$ and samples from $\exp(-\psi(x)-\frac{1}{2\lambda}\|x-y\|^2)$ for $O(1)$ many $y$ with $\lambda=\Theta(G^{-2}/\log(T/\delta))$.
\end{lemma}

This sampler only works in a bounded domain.
To apply for this sampler, we need the following concentration result:
\begin{lemma}[Gaussian Concentration,\cite{ledoux2006concentration}]
    Let $X\sim\exp(-f)$ for $1/\eta$-strongly convex function and $g$ is $G$-Lipshcitz, then
    \begin{align*}
        \Pr[g(X)-\E g(X)\ge t]\le e^{-t^2/(2\eta G)}.
    \end{align*}
\end{lemma}

Define $\pi$ to be the density proportional to $\exp(-\eta(L(\theta;D)+\mu\|\theta\|^2/2))$.
Define $g(\theta):=\|\theta\|$, and we know $\E_{\theta}\|\theta-\theta_\mu^*\|^2\le d/\mu$ by the standard analysis in sampling, where $\theta_\mu^*:=\arg\min L(\theta;D)+\mu\|\theta\|^2/2$.
By Assumption~\ref{assump:unbounded}, we know $\|\theta_\mu^*\|\le C$.
Hence we should restrict $\pi$ in a ball of radius $O(\sqrt{d/\mu}+\sqrt{G\log(4/\delta)}/\eta \mu)$ and get $\pi'$, the TV distance between $\pi$ and $\pi'$ is at most $\delta/4$.

Directly applying Lemma~\ref{thm:sampler} and the parameter setting in Theorem~\ref{thm main app} with $T=O(\frac{\eta G^2}{\mu}\log^2(dn/\delta))$, constructing the sample $x^{app}$ from $\pi'$ requires only $\Tilde{O}(n^2\epsilon^2)$ steps and zero-th order queries in expectation, such that the TV distance between our output $x^{app}$ and the objective distribution $\pi'$ is at most $\delta/4$.
Then the TV distance between the distribution $x^{app}$
and $\pi$ is at most $\delta/2$ by triangle inequality.
% Finally, we check the oracle complexity of the algorithm. For a differentially private mechanism $\M$, 
% privacy curve $\delta:\R_{\ge 0}\to[0,1]$ is a function such that for every $\epsilon\ge 0$, $\M$ is $(\epsilon,\delta(\epsilon))$-DP, i.e., we have 
% $$\delta\big(\M(\D)\|\M(\D')\big)(\epsilon)\le\delta(\epsilon)$$
% for all neighboring datasets $\D,\D'$. 

% As for the implementation, we make use of the sampler in \cite{gll22} (Lemma~\ref{thm:sampler}), and achieve the claimed query complexity.
\newpage
\section{Omitted proof for Section~\ref{sec:reduction} }

\subsection{Proof of Theorem \ref{thm4}}
\reductionthm*
\begin{proof}
Without loss of generality,
let $\K=\{\theta:\|\theta-\theta_0\|_2\leq C\}$ be the $\ell_2$ ball around $\theta_0$,
let $\ell(\theta;z)$ be the convex functions used in Definition~\ref{prop1}, and as mentioned we can find our loss functions $\tilde{\ell}(\theta;z)=\min_{y\in \K} \ell(y;z) +G\|\theta-y\|_2$.
As $\theta^*\in \K$, we know that 
\begin{align}
\label{eq:same_minimum}
\vspace{-2mm}
    \tilde{L}(\theta^*;\D)=\min_{\theta\in\K} L(\theta;D).
\end{align}
Denote $\tilde{\theta}^{priv}=\Pi_{\K} (\theta^{priv})$ the projected point of $\theta^{priv}$ to $\K$. 
Because post-processing keeps privacy, outputting $\tilde{\theta}^{priv}$ is also $(\epsilon,\delta)$-DP.
By Definition~\ref{prop1}, we have
\begin{align}
\label{eq:lowerbound_constrained}
\vspace{-2mm}
    L(\tilde{\theta}^{priv};\D)-\min_{\theta}L(\theta;\D)=\Omega(f(d,n,\epsilon,\delta,G,C)).
\end{align}
If $\tilde{\theta}^{priv}=\theta^{priv}$, which means $\theta^{priv}\in \K$, then because $\tilde{\ell}(\theta;z)$ is equal to $\ell(\theta;z)$ for any $\theta\in \K$ and $z$, one has $\tilde{L}(\theta^{priv};\D)=\tilde{L}(\tilde{\theta}^{priv};\D)=L(\tilde{\theta}^{priv};\D)$.

If $\tilde{\theta}^{priv}\ne \theta^{priv}$ which means $\theta^{priv} \notin \K$, then since $\ell(\cdot;z)$ is $G$-Lipschitz, for any $z$, we have that (denoting $y^*=\arg\min_{y\in \K} \ell(y;z) +G\|\theta^{priv}-y\|_2$):
\begin{align*}
    \tilde{\ell}(\theta^{priv};z)&=\min_{y\in \K} \ell(y;z) +G\|\theta^{priv}-y\|_2\\
    &=\ell(y^*;z) +G\|\theta^{priv}-y^*\|_2\\
    &\ge \ell(y^*;z) +G\|\tilde{\theta}^{priv}-y^*\|_2\\
    &\ge \min_{y\in \K} \ell(y;z) +G\|\tilde{\theta}^{priv}-y\|_2\\
    &=\tilde{\ell}(\tilde{\theta}^{priv};z),
\end{align*}
where the third line is by the Pythagorean Theorem for the convex set, see Lemma~\ref{lm:Pythagorean}.
We have $\tilde{L}(\theta^{priv};\D)\ge\tilde{L}(\tilde{\theta}^{priv};\D)=L(\tilde{\theta}^{priv};\D)$.
In a word, we get
\begin{align}
    \label{eq:projection_decrease_value}
    \tilde{L}(\theta^{priv};\D)\ge\tilde{L}(\tilde{\theta}^{priv};\D)=L(\tilde{\theta}^{priv};\D).
\end{align}

Combining Equation~(\ref{eq:same_minimum}), (\ref{eq:lowerbound_constrained}) and (\ref{eq:projection_decrease_value}) together, we have that
\begin{align*}
    &~\tilde{L}(\theta^{priv};\D)-\tilde{L}(\theta^*;\D)\\
    = &~ \tilde{L}(\theta^{priv};\D)-\min_{\theta}L(\theta;\D)\\
    \geq & ~ L(\tilde{\theta}^{priv};\D)-\min_{\theta}L(\theta;\D)\\
    \geq & ~ \Omega(f(d,n,\epsilon,\delta,G,C)).
\end{align*}
\end{proof}

\subsection{Proof of Lemma \ref{lem5}}
\reductionone*
\begin{proof}

By using a standard packing argument we can construct $K=2^{\frac{d}{2}}$ points $z^{(1)},...,z^{(K)}$ in $\{\frac{1}{\sqrt{d}},-\frac{1}{\sqrt{d}}\}^d \cup \{\bzero\}$ such that for every distinct pair $z^{(i)},z^{(j)}$ of these points, we have
\begin{equation}
    \|z^{(i)}-z^{(j)}\|_2\ge\frac{1}{8}
\end{equation}

It is easy to show the existence of such a set of points using the probabilistic method (for example, the Gilbert-Varshamov construction of a linear random binary code). 

Fix $\epsilon>0$ and define $n^{\star}=\frac{d}{20\epsilon}$. Let’s first consider the case where $n\le n^{\star}$. We construct $K$ datasets $\D^{(1)},...,\D^{(K)}$ where for each $i\in [K]$, $\D^{(i)}$ contains $n$ copies of $z^{(i)}$. Note that $q(\D^{(i)})=z^{(i)}$, we have that for all $i\ne j$,
\begin{equation}
    \|q(\D^{(i)})-q(\D^{(j)})\|_2\ge\frac{1}{8}
\end{equation}

Let $\A$ be any $\epsilon$-differentially private algorithm.
Suppose that for every $\D^{(i)}, i\in[K]$, with probability at least $1/2$, $\|\A(\D^{(i)})-q(\D^{(i)})\|_2 < \frac{1}{16}$,i.e.,$Pr[\A(\D^{(i)})\in B(\D^{(i)})]\ge \frac{1}{2}$ where for any dataset $\D$, $B(\D)$ is defined as
\begin{equation}
    B(\D)=\{x\in R^d:\|x-q(\D)\|_2<\frac{1}{16} \}
\end{equation}

Note that for all $i\ne j$, $\D^{(i)}$ and $\D^{(j)}$ differs in all their $n$ entries. Since $\A$ is $\epsilon$-differentially private, for all $i
\in[K]$, we have $Pr[A(\D^{(1)})\in B(\D^{(i)})]\ge \frac{1}{2}e^{-\epsilon n}$. Since all $B(\D^{(i)})$ are mutually disjoint, then
\begin{equation}
    \frac{K}{2}e^{-\epsilon n}\le \sum_{i=1}^K Pr[\A(\D^{(1)})\in B(\D^{(i)})]\le 1
\end{equation}

which implies that $n>n^{\star}$ for sufficiently large $p$, contradicting the fact that $n\le n^{\star}$. Hence, there must exist a dataset $\D^{(i)}$ on which $A$ makes an $\ell_2$-error on estimating $q(\D)$ which is at least $1/16$ with probability at least $1/2$. Note also that the $\ell_2$ norm of the sum of the entries of such $\D^{(i)}$ is $n$.

Next, we consider the case where $n>n^{\star}$. As before, we construct $K=2^{\frac{p}{2}}$ datasets $\tilde{\D}^{(1)},\cdots,\tilde{\D}^{(K)}$ of size $n$ where for every $i\in [K]$, the first $n^{\star}$ elements of each dataset $\tilde{\D}^{(i)}$ are the same as dataset $\D^{(i)}$ from before whereas the remaining $n-n^{\star}$ elements are $\bzero$. 

Note that any two distinct datasets $\tilde{\D}^{(i)},\tilde{\D}^{(j)}$ in this collection differ in exactly $n^{\star}$ entries. Let $\A$ be any $\epsilon$-differentially private algorithm for answering $q$. Suppose that for every $i\in[K]$, with probability at least $1/2$, we have that
\begin{equation}
    \|\A(\tilde{\D}^{(i)})-q(\tilde{\D}^{(i)})\|_2< \frac{n^{\star}}{16n}
\end{equation}

Note that for all $i\in[K]$, we have that $q(\tilde{\D}^{(i)})=\frac{n^*}{n} q(\D^{(i)})$. Now, we define an algorithm $\tilde{\A}$ for answering $q$ on datasets $\D$ of size $n^{\star}$ as follows. First, $\tilde{\A}$ appends $\bzero$ as above to get a dataset $\tilde{\D}$ of size $n$. Then, it runs $\A$ on $\tilde{\D}$ and outputs $\frac{n^* \A(\tilde{\D})}{n}$. Hence, by the post-processing propertry of differential privacy, $\tilde{\A}$ is $\epsilon$-differentially private since $\A$ is $\epsilon$-differentially private. Thus for every $i\in[K]$, with probability at least $1/2$, we have that $||\tilde{\A}(\D^{(i)})-q(\D^{(i)})||_2< \frac{1}{16}$. However, this contradicts our result in the first part of the proof. Therefore, there must exist a dataset $\tilde{\D}^{(i)}$ in the above collection such that, with a probability at least $1/2$,
\begin{equation}
    \|\A(\tilde{\D}^{(i)})-q(\tilde{\D}^{(i)})\|_2\ge \frac{n^{\star}}{16n}\ge \frac{d}{320\epsilon n}
\end{equation}
Note that the $\ell_2$ norm of the sum of entries of such $\tilde{D}^{(i)}$ is always $n^{\star}$.
\end{proof}

\subsection{Proof of Theorem \ref{thm1}}

\reductiontwo*
\begin{proof}
We can prove this theorem directly by combining the lower bound in \cite{bst14} and our reduction approach (Theorem~\ref{thm4}), but we try to give a complete proof as an example to demonstrate how does our black-box reduction approach work out.

Let $\A$ be an $\epsilon$-differentially private algorithm for minimizing $L$ and let $\theta^{priv}$ denote its output, define $r:=\theta^{priv}-\theta^*$. First, observe that for any $\theta\in \R^d$ and dataset $\D$ as constructed in Lemma \ref{lem5} (recall that $\D$ consists of $n^*$ copies of a vector $z\in \{\frac{1}{\sqrt{d}},-\frac{1}{\sqrt{d}}\}^d$ and $n-n^*$ copies of $\mathbf{0}$).
%\textcolor{red}{This needs the knowledge of the proof of Lemma~\ref{lem5}}
\begin{equation}
    L(\theta^*;\D)=\frac{n-n^*}{n}\max\{0,\|\theta^*\|_2-1\}+\frac{n^*}{n} \min_{\|y\|_2\le 1} (-\langle y,z \rangle+\|\theta^*-y\|_2)= -\frac{n^*}{n}
\end{equation}
when $\theta^*=z$, and also
\begin{align*}
    L(\theta^{priv};\D)&=\frac{n-n^*}{n}\max\{0,\|\theta^{priv}\|_2-1\}+\frac{n^*}{n} \min_{\|y\|_2\le 1} (-\langle y,z \rangle+\|\theta^{priv}-y\|_2)\\
    &\ge \frac{n^*}{n} \min_{\|y\|_2\le 1} (-\langle y,z \rangle+\|\theta^{priv}-y\|_2)\\
    &=\frac{n^*}{n} \min_{\|y\|_2\le 1} (-\langle y,z \rangle+\|r+z-y\|_2)\\
    &\text{(because $\theta^*=z$)}\\
    &\ge \frac{n^*\min\{1,\|r\|^2_2\}}{8n} -\frac{n^*}{n}
\end{align*}
the last inequality follows by discussing the norm of $y-z$. If $\|y-z\|_2\le \|r\|_2/2$, then 
\begin{equation}
    \|r+z-y\|_2\ge \|r\|_2/2\ge \min\{1,\|r\|^2_2\}/2
\end{equation}
combining with the fact that $|\langle y,z \rangle|\le 1$ proves the last inequality. 

If $\|y-z\|_2\ge \|r\|_2/2$, then we have $\min_{\|y\|_2\le 1} -\langle y,z \rangle\ge -1+\frac{\|r\|_2^2}{8}$. To prove this, we assume $z=e_1$ without loss of generality and $y-z=(x_1,...,x_d)$ where $\sum_{i=1}^d x_i^2\ge \|r\|_2^2/4$. Since $\|y\|_2=\|y-z+z\|_2\le 1$, we must have 
\begin{equation}
     1+\sum_{i=1}^d x_i^2+2x_1\le 1
\end{equation}
Thus $-\langle y,z \rangle= -1-\langle y-z,z \rangle=-1-x_1\ge -1+\|r\|_2^2/8$ as desired, which finishes the discussion on the second case.

From the above result we have that 
\begin{equation}
    L(\theta^{priv};\D)-L(\theta^*;\D)\ge \frac{n^*\min\{1,\|r\|^2_2\}}{8n} 
\end{equation}
To proceed, suppose for the sake of a contradiction, that for every dataset $\D=\{z_1,...,z_n\}\subset \{\frac{1}{\sqrt{d}},-\frac{1}{\sqrt{d}}\}^d\cup \{ \mathbf{0}\}$ with $\|\sum_{i=1}^n z_i\|_2=n^*$, with probability more
than $1/2$, we have that $\|\theta^{priv}-\theta^*\|_2=\|r\|_2\ne \Omega(1)$. Let $\tilde{\A}$ be an $\epsilon$-differentially private algorithm that first runs $\A$ on the data and then outputs $\frac{n^*}{n} \theta^{priv}$. Recall that $q(\D)=\frac{n^*}{n}\theta^*$, this implies that for every dataset $\D=\{z_1,...,z_n\}\subset \{\frac{1}{\sqrt{d}},-\frac{1}{\sqrt{d}}\}^d\cup \{ \bzero\}$ with $\|\sum_{i=1}^n z_i\|_2=n^*$, with probability more than $1/2$, $\|\tilde{\A}(\D)-q(\D)\|_2\ne \Omega(\min(1,\frac{d}{n\epsilon}))$ which contradicts Lemma \ref{lem5}. Thus, there must exists a dataset $\D=\{z_1,...,z_n\}\subset \{\frac{1}{\sqrt{d}},-\frac{1}{\sqrt{d}}\}^d\cup \{ \bzero\}$ with $\|\sum_{i=1}^n z_i\|_2=n^*$, such that with pr
obability more than $1/2$, we have $\|r\|_2=\|\theta^{priv}-\theta^*\|_2= \Omega(1)$, and as a result
\begin{equation}
    L(\theta^{priv};\D)-L(\theta^*;\D)=\Omega(\min(1,\frac{d}{n\epsilon}))
\end{equation}

\end{proof}
\section{Omitted proof for Section~\ref{sec:improvebound}}

\subsection{Fingerprinting codes}
Fingerprinting code was first introduced in \cite{boneh1998collusion}, developed and frequently used to demonstrate lower bounds in the DP community \cite{buv18,su15,steinke2015interactive}. To overcome the challenge discussed before, we slightly modify the definition of the fingerprinting code used in this work.

\begin{definition}[$\ell_1$-loss Fingerprinting Code]
\label{def:FC}
A $\gamma$-complete, $\gamma$-sound, $\alpha$-robust $\ell_1$-loss fingerprinting code for $n$ users with length $d$ is a pair of random variables $\D\in \{0, 1\}^{n\times d}$ and $\Trace$ : $[0,1]^d \to 2^{[n]}$ such that the following hold:
\paragraph{Completeness:}
For any fixed $\M: \{0, 1\}^{n\times d}\to [0,1]^d$, 
\begin{align*}
    \Pr \Big[ L(\M(\D);\D)-\min_{\theta}L(\theta;\D)\le \alpha d \\~~ \land (\Trace (\M(\D))=\emptyset)  \Big]\le \gamma.
\end{align*}
\paragraph{Soundness:}
For any $i\in [n]$ and fixed $M: \{0, 1\}^{n\times d}\to [0,1]^d$,
$$ 
\Pr [i\in \Trace (M(\D_{-i}))]\le \gamma,
$$
where $\D_{-i}$ denotes $\D$ with the $i$th row replaced by some fixed element of $\{0, 1\}^d$.
\end{definition}

Definition~\ref{def:FC} is similar to the one in \cite{su15} (See Definition 3.2 in \cite{su15}), except that their requirement of completeness is $\Pr[||\M(\D)-q(\D)\|_1\le \alpha d\land \Trace (\M(\D))=\emptyset ]\le \gamma$.
As discussed before, they use the fingerprinting code in their version to build a lower bound on the mean estimation, while we modify the definition and build a lower bound on the DP-ERM under our set-up.

Following the optimal fingerprinting construction \cite{tar08}, and subsequent works \cite{buv18} \cite{bst14}, we have the following result demonstrating the existence of fingerprinting code in our version. 

\begin{restatable}{lemma}{appone}
\label{lm:fingerprinting_code}
For every 
$n\ge1$, and $\gamma \in (0, 1]$, there exists a $\gamma$-complete, $\gamma$-sound, $1/150$-robust $\ell_1$-loss fingerprinting code for $n$ users with length $d$ where 
\begin{align*}
    d=O(n^2\log(1/\gamma).
\end{align*}
\end{restatable}

\subsection{Proof of Lemma~\ref{lm:fingerprinting_code}}

\iffalse
\appone*
\fi

\begin{proof}
We want to find $\alpha$ such that any set satisfying the completeness condition in the above definition is a subset of the $F_{\beta}$ set of \cite{buv18} after rounded to binary numbers, which is
$$
F_{\beta}(\D)=\left\{c'\in \{0,1\}^d | \Pr_{j\in [d]} [\exists i\in [n], c'_j=\D_{ij}]\ge 1-\beta \right\}
$$
Suppose, round the output $\M(\D)\in[0,1]^d$ to a binary vector $c\in\{0,1\}^d$ where $c\notin F_{\beta}(\D)$, 
% after rounding doesn't satisfy the $F_{\beta}$ condition, 
then it makes an "illegal" bit on at least $\beta d$ columns, where each of these columns shares the same number (all-one or all-minus-one columns). It means that on each of these columns, $\M(\D)$ has the opposite sign to the shared number, which means on this column, say $i$, the induced loss is lower bounded:
$$
\frac{1}{n} \sum_{j=1}^n (|(\M(\D)_i-\D_{ij}|-|\sign (\bar{\D_i})-\D_{ij}|) =\frac{1}{n} \sum_{j=1}^n |(\M(\D)_i-\D_{ij}|\ge 1,
$$
which means $L(\M(\D);\D)-\min_{\theta}L(\theta;\D)\ge \beta d/2$.
% if we choose $\alpha<\beta$, this will lead to the desired contradiction. 
By Theorem~\ref{thm:construction_fc} we get $\beta=1/75$ and conclude our proof.
\end{proof}

\subsection{Proof of Lemma \ref{lemep}}
\apptwo*

\begin{proof}
The proof uses a black-box reduction, therefore doesn't depend on $Q$. The direction that $O(n^*/\epsilon)$ samples are sufficient is equal to proving the assertion that given a $(1,o(1/n))$-differentially private algorithm $\A$, we can get a new algorithm $\A'$ with $(\epsilon,o(1/n))$-differential privacy at the cost of shrinking the size of the dataset by a factor of $\epsilon$. 

Given input $\epsilon$ and a dataset $X$, we construct $A'$ to first generate a new dataset $T$ by selecting each element of $X$ with probability $\epsilon$ independently, then feed $T$ to $\A$. Fix an event $S$ and two neighboring datasets $X_1,X_2$ that differs by a single element $i$. Consider running $\A$ on $X_1$. If $i$ is not included in the sample $T$, then the output is distributed the same as a run on $X_2$. On the other hand, if $i$ is included in the sample $T$, then the behavior of $\A$ on $T$ is only a factor of $e$ off from the behavior of $\A$ on $T\setminus \{i\}$. Again, because of independence, the distribution of $T\setminus \{i\}$ is the same as the distribution of $T$ conditioned on the omission of $i$.

For a set $X$, let $p_{X}$ denote the distribution of $\A(X)$, we have that for any event $S$,
\begin{align*}
    & p_{X_1}(S)=(1-\epsilon) p_{X_1}(S | i \notin T) +\epsilon p_{X_1}(S | i \in T)\\
    &\le (1-\epsilon) p_{X_2}(S) +\epsilon( e\cdot p_{X_2}(S)+\delta)\\
    &\le \exp(2\epsilon) p_{X_2}(S)+\epsilon \delta
\end{align*}
A lower bound of $p_{X_1}(S)\ge  \exp(-\epsilon) p_{X_2}(S)-\epsilon \delta/e$ can be obtained similarly. To conclude, since $\epsilon \delta=o(1/n)$ as the sample size $n$ decreases by a factor of $\epsilon$, $\A'$ has $(2\epsilon,o(1/n))$-differential privacy. The size of $X$ is roughly $1/\epsilon$ times larger than $T$, combined with the fact that $\A$ has sample complexity $n^*$ and $T$ is fed to $\A$, $\A'$ has sample complexity at least $\Theta(n^*/\epsilon)$.

For the other direction, simply using the composability of differential privacy yields the desired result. In particular, by the $k$-fold adaptive composition theorem in \cite{dmns06}, we can combine $1/\epsilon$ independent copies of $(\epsilon,\delta)$-differentially private algorithms to get an $(1, \delta/\epsilon)$ one and notice that if $\delta=o(1/n)$, then $\delta/\epsilon=o(1/n)$ as well because the sample size $n$ is scaled by a factor of $\epsilon$ at the same time, offsetting the increase in $\delta$.
\end{proof}

\subsection{Proof of Lemma~\ref{lm:error_by_appending}}
\begin{proof}
Without loss of generality, we can assume $z_{k(i-1)+1}'=z_{k(i-1)+2}'=\cdots=z_{ki}'=z_i$, and $z_{n-kn_k+1}'=z_{kn_k+2}'=\cdots=z_{n}'=0$.
With this observation, we know 
\begin{align*}
     &~~|\sum_{i=1}^{n_k}|q-z_i|/n_k-\sum_{i=1}^{n}|q-z_i'|/n|\\
     &=~|\sum_{i=1}^{n_k}|q-z_i|(1/n_k-k/n) -\sum_{i=n-kn_k+1}^{n}q/n|\\
     &\le~|\sum_{i=1}^{n_k}|q-z_i|(1/n_k-k/n)|+|\sum_{i=n-kn_k+1}^{n}q/n|\\
     &\le~n_k(\frac{1}{k/n-1}-\frac{k}{n})+k/n\le 3k/n.
\end{align*}
\end{proof}

% \newtheorem*{lm:error_by_appending}{Lemma~\ref{lm:error_by_appending}}
% We need the following technical lemma before proving Theorem~\ref{thm3}.
% \begin{lm:error_by_appending}
% Let $n,k$ be two large positive integers such that $k<n/1000$.
% Let $n_k=\lfloor n/k \rfloor$.
% Let $z_1,\cdots,z_{n_k}$ be $n_k$ numbers where $z_i\in\{0,1,1/2\}$ for all $i\in[n_k]$.
% For any real value $q\in[0,1]$, if we copy each $z_i$ $k$ times, and append $n-kn_k$ '0' to get $n$ numbers $z_1',\cdots,z_n'$, then one has
% \begin{align*}
%     |\sum_{i=1}^{n_k}|q-z_i|/n_k-\sum_{i=1}^{n}|q-z_i'|/n|\le k/n^2+k/n.
% \end{align*}
% \end{lm:error_by_appending}

% \begin{proof}
% Without loss of generality, we can assume $z_{k(i-1)+1}'=z_{k(i-1)+2}'=\cdots=z_{ki}'=z_i$, and $z_{n-kn_k+1}'=z_{kn_k+2}'=\cdots=z_{n}'=0$.
% With this observation, we know 
% \begin{align*}
%      &~|\sum_{i=1}^{n_k}|q-z_i|/n_k-\sum_{i=1}^{n}|q-z_i'|/n|\\
%      &=~|\sum_{i=1}^{n_k}|q-z_i|(1/n_k-k/n) -\sum_{i=n-kn_k+1}^{n}q/n|\\
%      &\le~|\sum_{i=1}^{n_k}|q-z_i|(1/n_k-k/n)|+|\sum_{i=n-kn_k+1}^{n}q/n|\\
%      &\le~k/n^2+k/n.
% \end{align*}
% \end{proof}

\subsection{Proof of Theorem~\ref{thm2}}

\appthree*
\begin{proof}

Let $k=\Theta(\log(1/\delta))$ be a parameter to be determined later satisfying $k/n<1/6000$, and $n_k=\lfloor n/k\rfloor$.
Consider the case when $d\geq d_{n_k}$ first, where $d_{n_k}=O(\epsilon^2 n_k^2\log(1/\delta))$.

Without loss of generality, we assume $\epsilon=1$ due to Lemma \ref{lemep}, and $d_{n_k}=O(n_k^2\log(1/\delta))$ corresponds to the number in Lemma~\ref{lm:fingerprinting_code} where we set $\gamma=\delta$.

We use contradiction to prove that for any $(\epsilon,\delta)$-DP mechanism $\M$, there exists some $\D\in\{0,1\}^{n\times d}$ such that
\begin{equation}
    \E[L(\M(\D);\D)-L(\theta^\star;\D)]\geq \Omega(d).
\end{equation}

Assume for contradiction that $\M:\{0,1\}^{n\times d}\rightarrow [0,1]^{ d}$ is a (randomized) $(\epsilon,\delta)$-DP mechanism such that
\begin{align*}
    \E[L(\M(\D);\D)-L(\theta^\star;\D)]< \frac{d}{3000}
\end{align*}
for all $\D\in\{0,1\}^{n\times d}$.
We then construct a mechanism $\M_k=\{0,1\}^{n_k\times d}$ with respect to $\M$ as follows: with input $\D^k\in\{0,1\}^{n_k\times d}$, $\M_k$ will copy $\D^k$ for $k$ times and append enough 0's to get a dataset $\D\in\{0,1\}^{n\times d}$. 
The output is $\M_k(\D^k)=\M(\D)$. $\M_k$ is $(k,\frac{e^{k}-1}{e^-1}\delta)$-DP by the group privacy.
% According to the construction above, we know that if $G_{\alpha_1}(\D^k)< (1-\alpha_2)d$, then $G_{\alpha_1-1/k}(\D)< (1-\alpha_2)d$ as well.

We consider algorithm $\A_{FP}$ to be the adversarial algorithm in the fingerprinting codes, which rounds the output $\M_k(\D^k)$ to the binary vector, i.e., rounding those coordinates with values no less than 1/2 to 1 and the remaining 0, and let $c=\A_{FP}(\M(\D))$ be the vector after rounding.
As $\M_k$ is $(k,\frac{e^k-1}{e-1}\delta)$-DP, $\A_{FP}$ is also $(k,\frac{e^k-1}{e-1}\delta)$-DP.

% If for some $\D^k\in\{0,1\}^{n_k\times p}$ with $G_{\alpha_1}(\D^k)\leq 1-\alpha_2$, $\D$ (constructed from $\D^k$ as above) further satisfies
% \begin{align*}
%     \E[L(\M(\D);\D)-L(\theta^\star;\D)]<\frac{\alpha_1\alpha_2d}{1000}.
% \end{align*}

Considering the $\ell_1$ loss, we can account for the loss caused by each coordinate separately. 
Recall that $\M_k(\D^k)=\M(\D)$.
Thus we have that
\begin{align*}
    &\E[L(\M_k(\D^k);\D^k)-L(\theta^\star;\D^k)]\\
    = & ~\E[L(\M(\D);\D^k)-L(\theta^\star;\D^k)]\\
    =& ~\E[L(\M(\D);\D^k)]-\E[L(\M(\D);\D)]+L(\theta^\star;\D)-L(\theta^\star;\D^k)+\E[L(\M(\D);\D)-L(\theta^\star;\D)]\\
    \le& ~ 6kd/n  +d/3000\\
    \le&~ d/900,
\end{align*}
where we use Lemma~\ref{lm:error_by_appending} for the third line. 

By Markov Inequality, we know that 
\begin{align*}
    \Pr[L(\M_k(\D^k);\D^k)-L(\theta^\star;\D^k)]>\frac{d}{150}]\leq 1/5.
\end{align*}
% $c \notin F_\beta(\D^k)$ means that there is at least $\beta d$ all-one or all-zero columns in $\D^k$, but $c$ is inconsistent in those coordinates.
% Thus if $c\notin F_\beta(\D^k)$,  we have that $L(\M_k(\D^k);\D^k)-L(\theta^\star;\D^k)\geq \beta d/2=d/150 >d/180$ for the $\D^k$ 
Lemma \ref{lm:fingerprinting_code} implies
% \begin{align}
    % \Pr[c\in F_\beta(\D^k)]\geq 4/5.
% \end{align}
% By the first property of the codes, one also has
\begin{align*}
    ~&~\Pr[L(\M_k(\D^k);\D^k)-L(\theta^\star;\D^k)\leq d/150 \bigwedge \Trace(\D^k,c)=\perp]
    % \\
    % \leq  &~ \Pr[c\in F_{\beta}(\D^k)\bigwedge \Trace(\D^k,c)=\perp]
    \leq \delta.
\end{align*}

% Recall that the arguments above are for those $\D^k\in\{0,1\}^{n_k\times d}$ with $G_{\alpha_1}(\D^k)\leq 1-\alpha_2$, which happens with probability at least $1-\alpha_3$ by the third property of fingerprinting codes.
By union bound, we can upper bound the probability
$\Pr[\Trace(\D^k,c)=\perp]\leq 1/5+ \delta\leq 1/2$.
As a result, there exists $i^*\in [n_k]$ such that 
\begin{align}
    \Pr[i^*\in \Trace(\D^k,c)]\geq 1/(2n_k).
\end{align}

Consider the database with $i^*$ removed, denoted by $\D^k_{-i^*}$. Let $c'=\A_{FP}(\M(\D^k_{-i^*}))$ denote the vector after rounding.
By the second property of fingerprinting codes, we have that
\begin{align*}
    \Pr[i^*\in \Trace(\D^k_{-i^*},c')]\leq \delta.
\end{align*}
By the differential privacy and post-processing property of $\M$, 
\begin{align*}
    \Pr[i^*\in\Trace(\D^k,c)]\leq e^{k}\Pr[i^*\in \Trace(\D^k_{-i^*},c')]+\frac{e^{k}-1}{e-1}\delta.
\end{align*}
which implies that
\begin{align}
\label{eq:contradic}
    \frac{1}{2n_k}\leq e^{k+1}\delta.
\end{align}
Recall that $2^{-O(n)}<\delta<o(1/n)$, and Equation~(\ref{eq:contradic}) suggests $k/n \leq 2e^k/\delta$ for all valid $k$. But it is easy to see there exists $k=\Theta(\log(1/\delta))$ and $k<n/6000$ to make this inequality false, which is contraction.
As a result, there exists some $\D\in\{0,1\}^{n\times d}$ such that
\begin{align*}
    \E[L(\M(\D);\D)-L(\theta^\star;\D)]\geq \frac{d}{3000}=\Omega(d).
\end{align*}
For the $(\epsilon,\delta)$-DP case when $\epsilon<1$, setting $Q$ to be the condition 
\begin{align*}
    \E[L(\M(\D);\D)-L(\theta^\star;\D)]=O(d)
\end{align*}
for all $\D\in\{0,1\}^d$
in Lemma \ref{lemep}, we have that any $(\epsilon,\delta)$-DP mechanism $\M$ which satisfies $Q$ for all $\D\in\{0,1\}^{n\times p}$ must have $n\geq\Omega(\sqrt{d\log(1/\delta)}/\epsilon)$.
In another word, for $d\ge O(\epsilon^2n^2/\log(1/\delta))$, for any $(\epsilon,\delta)$-DP mechanism $\M$, there exists some $\D\in\{0,1\}^d$ such that 
\begin{align*}
    \E[L(\M(\D);\D)-L(\theta^\star;\D)]\geq \Omega(d).
\end{align*}

Now we consider the case when $d<d_{n_k}$, i.e., when $n>n^\star\triangleq \Omega(\sqrt{d\log(1/\delta)}/\epsilon)$.
Given any dataset $\D\in\{0,1\}^{n^\star\times d}$, we construct a new dataset $\D'$ based on $\D$ by appending dummy points to $\D$: 
Specifically, if $n-n^\star$ is even, we append $n-n^{\star}$ rows among which half are 0 and half are $\{1\}^d$. 
If $n-n^{\star}$ is odd, we append $\frac{n-n^{\star}-1}{2}$ points 0, $\frac{n-n^{\star}-1}{2}$ points $\{1\}^d$ and one point $\{1/2\}^{d}$. 

Denote the new dataset after appending by $\D'$, we will draw contradiction if there is an $(\epsilon,\delta)$-DP algorithm $\M'$ such that $\E[L(\M(\D');\D')-L(\theta^{\star};\D')]=o(n^{\star} d/n)$ for all $\D'$, by reducing $\M'$ to an $(\epsilon,\delta)$-DP algorithm $\M$ which satisfies $\E[L(\M(\D);\D)-L(\theta^{\star};\D)]=o(d)$ for all $\D$.
% with $G_{\alpha_1-1/k}(\D)\geq 1-\alpha_2$.
%by reducing $A$ to another DP algorithm $A'$ which applies on $D$ but has $o(n^{\star} p)$ excess risk.

We construct $\M$ by first constructing $\D'$, and then use $\M'$ as a black box to get $\M(\D)=\M'(\D')$.
It's clear that such algorithm for $\D$ preserves  $(\epsilon,\delta)$-differential privacy. 
It suffices to show that if
\begin{equation}
    \E[L(\M'(\D');\D')-L(\theta^{\star};\D')]=o(n^{\star} d/n),
\end{equation}
then $L(\M(\D);\D)-L(\theta^{\star};\D)=o(d)$, which contradicts the previous conclusion for the case $n\leq n^\star$. Specifically, if $n-n^\star$ is even, we have that
\begin{align*}
    n^\star\E[L(\M(\D);\D)-L(\theta^\star;\D)]=n\E[L(\M'(\D');\D')-L(\theta^\star;\D')].
\end{align*}
and if $n-n^\star$ is odd, we have that 
\begin{align*}
    n^\star\E[L(\M(\D);\D)-L(\theta^\star;\D)]\leq n\E[L(\M'(\D');\D')-L(\theta^\star;\D')]+d/2,
\end{align*}
both leading to the desired reduction.
We try to explain the above two cases in more detail. 
If $n-n^*$ is even, then the minimizer of $L(; \D)$ and $L(\theta^*;\D)$ are the same. And the distributions of the $\M(\D)$ and $\M'(\D')$ are identical and indistinguishable. 
Multiplying $n^*$ or $n$ depends on the number of rows (recall that we normalize the objective function in ERM). The second inequality is because we append one point $\{1/2\}^d$, which can only increase the loss $(\|{1/2}^d-\theta^*\|_1)$ by $d/2$ in the worst case.

Combining results for both cases, we have the following:
\begin{equation}
    \E[L(\theta^{priv};\D)-L(\theta^{\star};\D)]=\Omega(\min(d,\frac{d n^* }{n}))=\Omega(\min(d,\frac{d\sqrt{d\log(1/\delta)}}{n\epsilon})).
\end{equation}

Setting Lipschitz constant $G=\sqrt{d}$ and diameter $C=\sqrt{d}$ completes the proof. 
% In particular, let $\D=(z,...,z)^{\top}\in \{0,1\}^{n\times d}$ contain $n$ identical copies of rows $z\in \{0,1\}^d$, $\theta^*=z$. Going over all such $\D$, we find that the set $\{\arg\min_{\theta} L(\theta;\D)|\D\subset \{0,1\}^{n\times d}\}$ contains $\{0,1\}^d$, with diameter at least $\sqrt{d}$. Meanwhile, its diameter can't exceed $\sqrt{d}$ obviously.
\end{proof}

\iffalse
\subsection{Proof of Theorem~\ref{thm3}}

\appfour*
\begin{proof}
We use the same construction as in Theorem \ref{thm1} which considers $\ell_2$ geometry. We only need to calculate the Lipschitz constant $G$ and the diameter of the domain $\K$.

For the Lipschitz constant $G$, notice that our loss is the $\ell_1$ norm: $\ell(\theta;z)=\|\theta-z\|_1$. It is evident  that it is $(d^{1-\frac{1}{p}})$-Lipschitz w.r.t. $\ell_p$ geometry.
% :\begin{equation}
%     d\times d^{-\frac{1}{p}}=d^{1-\frac{1}{p}}
% \end{equation}

For the domain, i.e., the unit $\ell_{\infty}$ ball $\K$, it obvious that its diameter w.r.t. $\ell_p$ geometry is $C=d^{\frac{1}{p}}$.
% to verify that its diameter $C=d^{\frac{1}{p}}$:
% \begin{equation}
%     (1^p\times d)^{\frac{1}{p}}=d^{\frac{1}{p}}
% \end{equation}
To conclude, we find that for any $\ell_p$ geometry where $p\geq1$, we have $GC=d$ which is independent of $p$. 
The bound holds for any $\ell_p$ geometry by applying Theorem~\ref{thm2}.
\end{proof}
\fi

\subsection{Proof of Theorem~\ref{thm3}}
\appfour*
\begin{proof}
We use the same construction as in Theorem \ref{thm2} which considers $\ell_2$ geometry. We only need to calculate the Lipschitz constant $G$ and the diameter of the domain $\K$.

For the Lipschitz constant $G$, notice that our loss is the $\ell_1$ norm: $\ell(\theta;z)=\|\theta-z\|_1$. It is evident  that it is $(d^{1-\frac{1}{p}})$-Lipschitz w.r.t. $\ell_p$ geometry.
% :\begin{equation}
%     d\times d^{-\frac{1}{p}}=d^{1-\frac{1}{p}}
% \end{equation}

For the domain, i.e., the unit $\ell_{\infty}$ ball $\K$, it obvious that its diameter w.r.t. $\ell_p$ geometry is $C=d^{\frac{1}{p}}$.
% to verify that its diameter $C=d^{\frac{1}{p}}$:
% \begin{equation}
%     (1^p\times d)^{\frac{1}{p}}=d^{\frac{1}{p}}
% \end{equation}
To conclude, we find that for any $\ell_p$ geometry where $p\geq1$, we have $GC=d$ which is independent of $p$. 
The bound holds for any $\ell_p$ geometry by applying Theorem~\ref{thm2}.
\end{proof}

\end{document}